\documentclass{article}

\usepackage{microtype}
\usepackage{graphicx}
\usepackage{subcaption}
\usepackage{booktabs} % for professional tables

\usepackage{hyperref}

\usepackage[accepted]{icml2026}

\usepackage{amsmath}
\usepackage{amssymb}
\usepackage{mathtools}
\usepackage{amsthm}
\usepackage{stfloats}
\usepackage{pifont}
\usepackage{xcolor}
\usepackage{tikz}
\usepackage{pgfplots}

\usepackage[capitalize,noabbrev]{cleveref}

\theoremstyle{plain}
\newtheorem{theorem}{Theorem}[section]

\theoremstyle{definition}
\newtheorem{definition}[theorem]{Definition}

\theoremstyle{remark}

\usepackage[textsize=tiny]{todonotes}

\icmltitlerunning{\textsc{Memora}: A Harmonic Memory Representation Balancing Abstraction and Specificity}

\begin{document}

\twocolumn[
  \icmltitle{\textsc{Memora}: A Harmonic Memory Representation \\Balancing Abstraction and Specificity}

  % It is OKAY to include author information, even for blind submissions: the
  % style file will automatically remove it for you unless you've provided
  % the [accepted] option to the icml2026 package.

  % List of affiliations: The first argument should be a (short) identifier you
  % will use later to specify author affiliations Academic affiliations
  % should list Department, University, City, Region, Country Industry
  % affiliations should list Company, City, Region, Country

  % You can specify symbols, otherwise they are numbered in order. Ideally, you
  % should not use this facility. Affiliations will be numbered in order of
  % appearance and this is the preferred way.
  % ==================================================================
  \icmlsetsymbol{equal}{*}

  \begin{icmlauthorlist}
    \icmlauthor{Menglin Xia}{equal,M365}
    \icmlauthor{Xuchao Zhang}{equal,M365}
    \icmlauthor{Shantanu Dixit}{M365}
    \icmlauthor{Paramaguru Harimurugan}{M365}
    \icmlauthor{Rujia Wang}{M365}
    \icmlauthor{Victor R{\"u}hle}{M365}
    \icmlauthor{Robert Sim}{M365}
    \icmlauthor{Chetan Bansal}{M365}
    \icmlauthor{Saravan Rajmohan}{M365}
  \end{icmlauthorlist}

  \icmlaffiliation{M365}{M365 Research, Microsoft}

  \icmlcorrespondingauthor{Menglin Xia}{mollyxia@microsoft.com}
  \icmlcorrespondingauthor{Xuchao Zhang}{xuchaozhang@microsoft.com}
    % ==================================================================
% 
  % You may provide any keywords that you find helpful for describing your
  % paper; these are used to populate the "keywords" metadata in the PDF but
  % will not be shown in the document
  \icmlkeywords{Machine Learning, ICML}

  \vskip 0.3in
]

% this must go after the closing bracket ] following \twocolumn[ ...

% This command actually creates the footnote in the first column listing the
% affiliations and the copyright notice. The command takes one argument, which
% is text to display at the start of the footnote. The \icmlEqualContribution
% command is standard text for equal contribution. Remove it (just {}) if you
% do not need this facility.

% Use ONE of the following lines. DO NOT remove the command.
% If you have no special notice, KEEP empty braces:
% \printAffiliationsAndNotice{}  % no special notice (required even if empty)
% Or, if applicable, use the standard equal contribution text:
\printAffiliationsAndNotice{\icmlEqualContribution}

\begin{abstract}
% Agent memory systems must manage continuously growing information while supporting efficient, context-aware retrieval for downstream tasks. Abstraction is essential for organizing memory—consolidating related information into stable, maintainable structures that scale with memory growth. However, overly coarse abstraction can obscure fine-grained, task-specific context, making it difficult for agents to retrieve the precise information needed for effective reasoning. We introduce \textbf{Memora}, a h\textbf{ar}m\textbf{o}nic \textbf{mem}ory representation that balances abstraction and specificity through complementary memory structures. Memora organizes memories using abstraction-level representations that provide stable, well-structured memory units, while cue-based associations preserve fine-grained and contextual access paths across memories. Retrieval is governed by their coordinated interaction rather than a single similarity metric, enabling agents to efficiently identify relevant memory subsets without exhaustive expansion. We also demonstrate that retrieval-augmented generation and knowledge-graph-based memory emerge as special cases of Memora. Experiments on agent reasoning tasks demonstrate improved retrieval relevance and reasoning efficiency as memory scales. (TODO: Add detailed improvement on xxx datasets)

Agent memory systems must accommodate continuously growing information while supporting efficient, context-aware retrieval for downstream tasks. Abstraction is essential for scaling agent memory, yet it often comes at the cost of specificity, obscuring the fine-grained details required for effective reasoning. We introduce \textsc{Memora}, a harmonic memory representation that structurally balances abstraction and specificity. \textsc{Memora} organizes information via its \textit{primary abstractions} that index concrete memory values and consolidate related updates into unified memory entries, while \textit{cue anchors} expand retrieval access across diverse aspects of the memory and connect related memories. Building on this structure, we employ a retrieval policy that actively exploits these memory connections to retrieve relevant information beyond direct semantic similarity. Theoretically, we show that standard Retrieval-Augmented Generation (RAG) and Knowledge Graph (KG)-based memory systems emerge as special cases of our framework. Empirically, \textsc{Memora} establishes a new state-of-the-art on the LoCoMo and LongMemEval benchmarks, demonstrating better retrieval relevance and reasoning effectiveness as memory scales.

\end{abstract}

\section{Introduction}

Large language models (LLMs) have substantially advanced the capabilities of autonomous agents in planning, tool use, and multi-step reasoning \cite{Wang_2024, guo2024largelanguagemodelbased}. However, intelligence is not just the ability to reason in the moment; it is the ability to learn and adapt over time--a capability rooted in how experience is organized, abstracted, and reused. While current agents excel at atomic problem-solving, they remain effectively \textit{stateless}, treating recurring tasks and user intents as isolated events \cite{yao2023react, wu2023autogen}. Without a principled mechanism to organize accumulated experience, agents are forced to repeatedly re-derive plans and reproduce redundant reasoning steps, leading to brittle performance and escalating token costs. As agents are increasingly deployed in real-world environments, this lack of structured, reusable memory has become the critical bottleneck, limiting their ability to support complex, long-horizon workflows \cite{Milam2025context}.

Scaling agent memory requires resolving a fundamental tension between abstraction and specificity. Existing designs typically collapse into one of two extremes. Many approaches favor specificity, either by storing raw interactions or document fragments \cite{xu2025amemagenticmemoryllm, lewis2021retrieval} or by extracting atomic facts from text \cite{chhikara2025mem0, nan2025nemori}. While detailed, these strategies suffer from fragmentation: raw logs overwhelm the agent with unstructured noise, while isolated facts stripped of their narrative context often fail to capture dependencies inherent in long-horizon tasks. Conversely, others adopt coarse abstractions, compressing experience into high-level summaries \cite{zhong2023memorybank, li2025memos}. While efficient, this approach strips away task-critical nuances (e.g., specific constraints, edge cases, or numeric details), rendering the memory insufficient for precise execution. This representational gap cripples retrieval: because memory lacks a structured link between high-level concepts and low-level details, agents cannot effectively navigate their own history. They are left choosing between retrieving a deluge of irrelevant facts or a vague summary that lacks actionable utility, ultimately failing to support robust long-horizon reasoning.

\begin{figure*}[t]
    \centering
    \includegraphics[width=\textwidth]{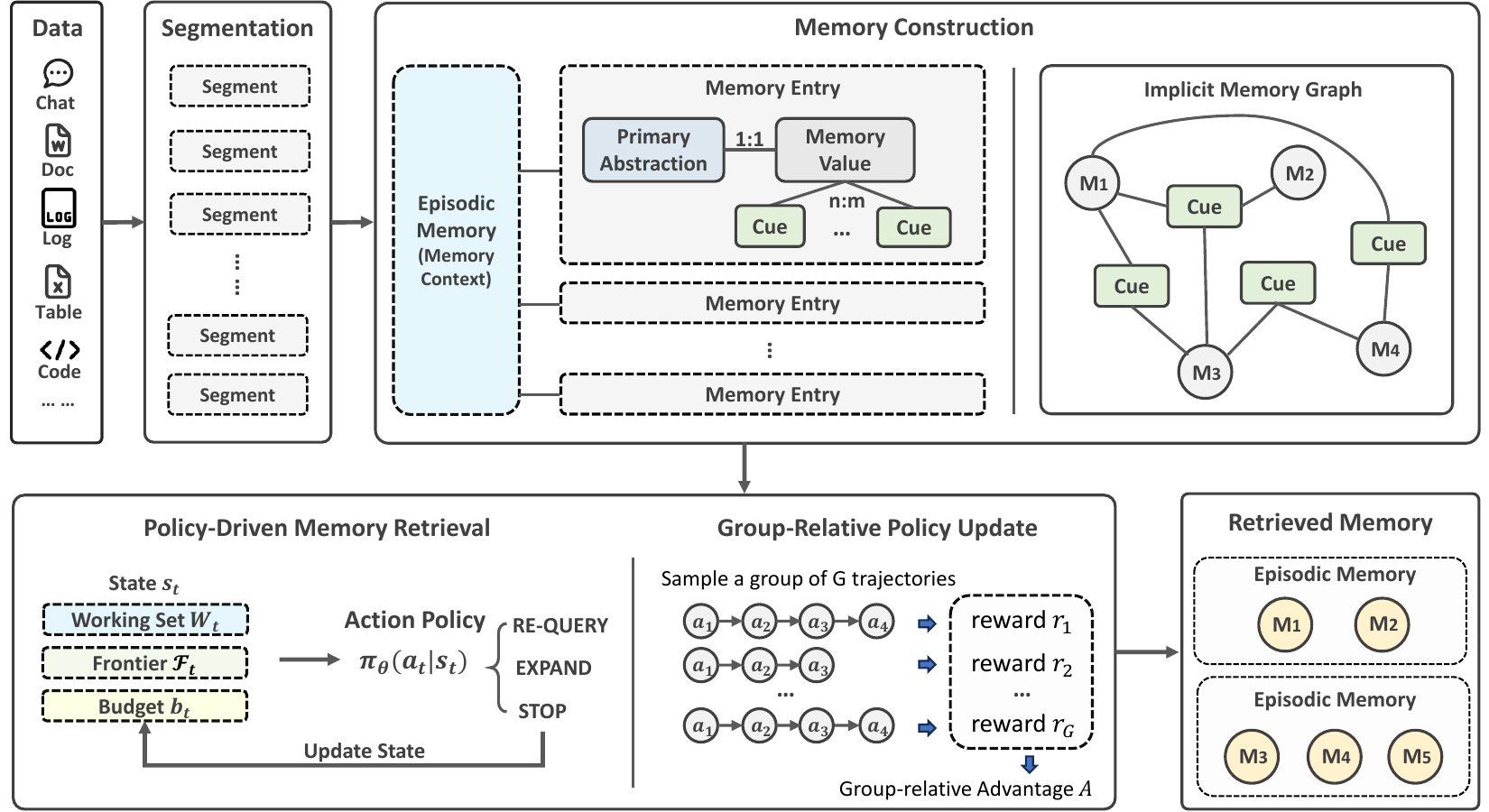}
    \caption{Overview of the \textsc{Memora} harmonic memory architecture.}
    \label{fig:overview}
\end{figure*}

To address these limitations, we introduce \textbf{\textsc{Memora}}, a harmonic memory architecture that structurally balances abstraction and specificity. \textsc{Memora} organizes experience through a dual-layered representation that acts as navigational scaffolding over concrete content. At the core is the \textit{primary abstraction}, which defines the canonical identity of a memory entry --- capturing what the memory is fundamentally about. Each memory entry is composed of a primary abstraction paired with a memory value, where the value stores the specific memorized information. The primary abstraction acts as a coherent container, enabling \textsc{Memora} to incorporate emerging concepts as new entries while aggregating related updates into a unified record, thereby preventing conceptually related information from fragmenting into disjoint memory entries. For example, the evolving timeline of a project can be represented as a single memory entry under the primary abstraction \textit{Project Memora Timeline}, within which milestones, design iterations, experiments, and decisions are incrementally appended. Complementing this, \textit{cue anchors} are extracted from the memory value to serve as contextualized access points. By encoding diverse perspectives and aspects of a memory, these anchors expand retrieval access and establish a many-to-many connectivity across related memory entries. Together, this organization allows agents to navigate from concrete contexts to stable abstractions, supporting implicit relational reasoning and temporal coherence without the overhead of full-context processing.

Furthermore, we introduce a policy-guided retrieval mechanism that treats memory access as an active reasoning process. Retrieval is formulated over a discrete action space consisting of query refinement, memory expansion, and termination. By iteratively selecting these actions, the policy retriever refines the retrieved context to uncover relevant information beyond immediate semantic similarity, effectively capturing multi-hop dependencies that static retrieval methods often miss.

Empirically, \textsc{Memora} establishes state-of-the-art performance on the LoCoMo and LongMemEval benchmarks (86.3\% and 87.4\% respectively), outperforming both strong memory baselines and full-context inference. Its ability to consistently outperform full-context inference demonstrates that memory retrieval guided by appropriate abstraction is more reliable than brute-force reconstruction for reasoning over extensive histories. By balancing abstraction with specificity, the harmonic organization of \textsc{Memora} provides a scalable foundation for long-horizon agent intelligence, reducing token consumption by up to 98\% compared to full-context processing. The code is available at \url{https://github.com/microsoft/Memora}.

\section{Related Work}

\paragraph{Agentic Memory Management Systems} Retrieval-Augmented Generation (RAG) \citep{lewis2021retrieval, borgeaud2022improving, gao2024retrieval} effectively extends the context capabilities of LLMs, but often lacks the precision required for long-horizon reasoning for agentic tasks. Consequently, recent research has shifted toward active memory management. Systems like MemGPT \citep{packer2023memgpt} draw inspiration from operating systems, introducing a virtual context management system that actively swaps information between ``active" context and archival storage. Similarly, MemOS \citep{chhikara2025mem0}, Memory OS \citep{kang2025memoryosaiagent} and MIRIX \cite{wang2025mirix} propose architecture-level solutions for managing memory lifecycles. Other approaches focus on the mechanism of interaction: LangMem\footnote{\url{https://langchain-ai.github.io/langmem/}} treats memory as an external tool that agents explicitly call to update, while learning-based approaches like Mem-R1 \citep{yan2026memoryr1} attempt to train models to manage their own memory policies autonomously.

\paragraph{Structured Memory Representations} Parallel to management strategies, significant research has focused on how memory is represented and structured to improve organization and retrieval. Early attempts like MemoryBank \citep{zhong2023memorybank} utilized summarization to condense past events, while A-Mem \citep{xu2025amemagenticmemoryllm} grouped memories into clusters. Mem0 \citep{chhikara2025mem0} takes a different approach, prioritizing the lifecycle of factual memories with explicit mechanisms to add, update, and delete extracted facts. Nemori \cite{nan2025nemori} attempts to combine episodic and semantic memory types to mirror human cognitive processes. However, without a cohesive structure, these isolated facts often become fragmented, leading to significant information loss during updates. Concurrently, graph-based representations, such as HippoRAG \citep{gutierrez2024hipporag}, GraphRAG \citep{edge2025localglobalgraphrag}, Zep \citep{rasmussen2025zep}, and Mem0-graph \citep{chhikara2025mem0}, have emerged to capture relationships between entities and support global reasoning. While graphs improve connectivity, they introduce distinct trade-offs: rigid schemas often abstract away critical details, while maintaining dense graph structures at scale can introduce significant retrieval noise. In addition, despite the structural innovations, the underlying representations often remain brittle, struggling to balance the specificity required for precision with the abstraction needed for scalability.

\paragraph{Positioning of \textsc{Memora}} Existing memory systems often index and retrieve memories through their \emph{content}, either as raw embeddings of extracted facts or via graph traversal over entity nodes. This couples \emph{what} is stored to \emph{how} it is accessed and forces the specificity--abstraction trade-off discussed above. \textsc{Memora}'s central novelty is to decouple these two concepts: memory content remains rich and unfragmented, while a separate structural layer of primary abstractions and cue anchors carries the navigational signal used for retrieval. Unlike graph-based methods, \textsc{Memora} also avoids the maintenance overhead of a pre-defined ontology: cue anchors are generated per memory and double as flexible metadata filters (source, timestamp, entity), so the same memory is reachable through multiple alternative access paths without committing to a fixed schema.

We provide a formal analysis demonstrating that \textsc{Memora} serves as a unified and strictly more expressive framework for memory retrieval. Traditional RAG and KG-based retrieval emerge as special cases under restricted configurations, while \textsc{Memora} supports richer mixed-key retrieval behaviors and principled efficiency improvements through abstraction-first scoping and structured traversal. More details including the proof can be found in Appendix \ref{appendix:theory}.

\section{Method}

We propose \textsc{Memora}, a harmonic memory representation designed to balance abstraction with specificity. We begin by formalizing the problem setting, followed by a detailed description of the proposed method.

\subsection{Problem Formulation}

We formulate memory management as the maintenance of a structured store derived from a continuous, heterogeneous data stream.

Let $\mathcal{D} = \{d_1, \ldots, d_N\}$ denote a growing corpus of documents, logs, code, tables, or agentic interaction traces.

Our objective is to learn a memory construction function
\[
\mathcal{F}_m : \mathcal{D} \rightarrow \mathcal{M},
\]
that maps raw data to a structured memory set $\mathcal{M}$, and a retrieval function

\[
\mathcal{Q}(q, \mathcal{M}) \rightarrow \mathcal{M}_q, \quad \mathcal{M}_q \subseteq \mathcal{M},
\]

that, given a query $q$, selects a compact subset of relevant memory entries $\mathcal{M}_q$ to maximize downstream task utility.

The core design challenge is to maximize the relevance of $\mathcal{M}_q$ while minimizing its size ($|\mathcal{M}_q| \ll |\mathcal{M}|$) and retrieval latency, necessitating a representation that supports both high-level semantic scanning and fine-grained contextual lookup.
%By decoupling stable abstraction from contextual access, \textsc{Memora} enables scalable memory retrieval that supports long-horizon reasoning without reliance on brute-force long-context inference.

\subsection{\textsc{Memora} Overview}

Figure~\ref{fig:overview} illustrates the overall architecture of \textsc{Memora}. Raw data from multiple sources are first segmented into semantic units, each associated with episodic context capturing situational information. These segments are transformed into harmonic memory entries, where each entry consists of a primary abstraction paired with a memory value and augmented with cue anchors. Primary abstractions provide stable canonical identities that consolidate related and evolving information, while cue anchors induce many-to-many associations across memory entries. Based on shared cue anchors and abstraction-level relationships, these associations give rise to an implicit memory graph that encodes relational structure among memory entries without requiring explicit edge construction. At query time, an agent query is jointly matched against primary abstractions and cue anchors to identify relevant memory entries. Memory reasoning then traverses the resulting abstraction- and cue-based associations to retrieve a coherent set of related memory entries together with their episodic contexts. This design enables scalable, context-aware retrieval that supports downstream reasoning, planning, and decision-making without requiring full interaction histories to be reconstructed in the context window. The retrieval policy can be further optimized using Group-Relative Policy Optimization, which trains the policy by comparing groups of retrieval trajectories and updating it based on relative advantages, encouraging effective multi-step navigation and early stopping behavior.

\subsection{Segmentation}
Given a data item $d \in \mathcal{D}$, we first apply a segmentation function $\mathcal{S}(d)$ to decompose the content $x$ into a set of semantically coherent segments $\{s_1, \ldots, s_k \}$.
Each segment $s_i$ serves as the input unit for memory construction. This segmentation step determines the granularity at which memory entries are created and updated, enabling primary abstractions to consolidate related information while preserving contextual specificity. Notably, a single segment may give rise to multiple memory entries. The implementation of $\mathcal{S}$ depends on the data format: we employ a prompt-based extraction mechanism for unstructured narratives, but leverage structural hierarchies (such as document headers) for formatted files.
\subsection{Episodic Memory}

Episodic memory in \textsc{Memora} captures the narrative context associated with each segment. For every segment $s_i$, we construct an episodic memory $e_i = \mathcal{E}(s_i)$ that serves as a shared narrative grounding for all memory entries derived from that source. Crucially, the representation of $e_i$ is flexible: it can take the form of an extracted high-level summary--capturing participants, intent, and temporal scope--or retain the raw segment text itself to preserve exact phrasing and subtle cues. This design allows episodic memory to function as a contextual anchor, adapting the balance between compression and fidelity based on the domain.

% Rather than encoding fine-grained factual details, episodic memory captures high-level information such as the situation, participants, and temporal scope of the interaction. This design enables episodic memory to function as a contextual abstraction that complements more detailed memory representations. The episodic memory construction process is implemented via a prompt-driven extraction mechanism.

During memory retrieval and reasoning, episodic memories play a central role in preserving narrative coherence across retrieved items. Memory entries associated with the same episodic memory are grouped together, allowing the agent to recover the broader context surrounding individual facts. This episodic grouping supports coherent multi-step reasoning, planning, and decision-making in downstream agent workflows.

% \subsection{Primary Abstraction}

% Each memory entry $m \in \mathcal{M}$ is associated with a \emph{primary abstraction} $a(m) \in \mathcal{A}$, which represents the core concept or action distilled from accumulated experience. The primary abstraction serves as the stable organizing reference for the memory entry, determining how related information is grouped, consolidated, and maintained over time.

% Rather than encoding a single observation, the primary abstraction enables multiple related segments to be organized under the same memory entry when they describe the same underlying concept, even if they differ in surface form, context, or temporal occurrence. As new information arrives, content that aligns with an existing primary abstraction is incrementally incorporated into the corresponding memory entry, allowing the memory to evolve without fragmenting into redundant records.

% By operating at a coarse but semantically meaningful level, primary abstractions provide a compact structure over the memory space, reducing redundancy and supporting efficient retrieval. At the same time, they avoid over-specification, ensuring that abstraction does not obscure task-relevant detail, which is preserved through complementary mechanisms such as cue anchors. Together, this design enables Memora to maintain coherent, long-lived memory entries that support consistent reasoning across interactions.

\subsection{Primary Abstraction}

% The primary abstraction $a_j$ canonically represents the core concept or action distilled from one or more data items, serving as the stable organizing unit of memory. Multiple related observations, updates, or contextual details may be consolidated under the same primary abstraction, enabling the memory entry to evolve over time rather than fragment across redundant records. The memory content $u_j$ aggregates the associated representations, including values, attributes, temporal information, and contextual signals that ground the abstraction in concrete experience. To support contextualized access, \textsc{Memora} associates each memory entry with a set of \emph{cue anchors}.

To prevent memory fragmentation, we introduce \emph{primary abstraction} to organize memory around stable, semantically meaningful concepts rather than individual observations. A primary abstraction $a$ canonically represents a core concept or action, capturing what the memory is fundamentally about and serving as the stable organizing unit of memory. It allows related information, such as recurring events or evolving entity states, to be consolidated under a single persistent entry rather than fractured across redundant records.

The construction of the memory entries along with the primary abstraction follows a two-stage process: extraction and consolidation. Given a new input segment $s$, we first induce a set of candidate memory entries, each consisting of a proposed abstraction and its concrete content:
\begin{equation}
\mathcal{F}_a(s) = \{ m_{i} \}_{i=1}^{N}, \qquad m_i = (a_i, v_i),
\end{equation}
where $a_i$ represents the primary abstraction and $v_i$ denotes the corresponding memory value, which stores the concrete details. This step proposes \emph{potential} new memories prior to verification against the existing store.

In the consolidation phase, we integrate these candidates into $\mathcal{M}$. For a new candidate memory entry $m_i$, we first retrieve top-$k$ existing entries most similar to the induced abstraction $a_i$:
\begin{equation}
\mathcal{R}(a_i) = \operatorname{TopK}_{m \in \mathcal{M}}\bigl(\mathrm{sim}(a_i, a_m);\, k\bigr),
\end{equation}
where $\mathrm{sim}(\cdot,\cdot)$ denotes cosine similarity between the primary abstraction embeddings. We refine this set by filtering out candidates below a similarity threshold $\gamma$:

\begin{equation}
\mathcal{U}(a_i) = \{ m \in \mathcal{R}(a_i) \mid \mathrm{sim}(a_i, a_m) \ge \gamma \}.
\end{equation}

Next, an LLM-based selection function $\mathcal{J}$ determines if the new candidate $(a_i, v_i)$ refers to the same underlying concept as any retrieved entry in $\mathcal{U}(a_i)$:
\begin{equation}
m^\star(a_i) = \mathcal{J}\bigl(a_i, \mathcal{U}(a_i)\bigr).
\end{equation}
Here $\mathcal{J}(\cdot)$ returns the target memory entry $m^\star(a_i)$ if a match is found, or $\varnothing$ if the abstraction $a_i$ is a novel concept.

The final memory construction operation follows a create-or-update rule:
\begin{equation}
m_i =
\begin{cases}
\mathrm{Update}\!\left(m^\star(a_i),\, a_i, v_i\right), & m^\star(a_i) \neq \varnothing,\\[3pt]
\mathrm{Create}\!\left(a_i, v_i\right), & m^\star(a_i) = \varnothing.
\end{cases}
\end{equation}

When a match $m^\star(a_i)$ is found, the $\mathrm{Update}(\cdot)$ operation merges the new content $v_i$ into the existing memory $m^\star(a_i)$, potentially also refining its abstraction to reflect the aggregated information, yielding an updated abstraction $a'_i$. Otherwise, $\mathrm{Create}(\cdot)$ initializes a new memory entry. This policy ensures that each memory entry remains anchored to a single primary abstraction, while enabling new information semantically aligned with existing content to be incrementally incorporated. As a result, the system enriches existing concepts with new details where possible, establishing new abstractions only when necessary.

\subsection{Cue Anchors}

While primary abstractions provide stable and compact organization of memory, they are intentionally coarse and do not capture all task-relevant details needed for flexible retrieval. To address this limitation, \textsc{Memora} introduces \emph{cue anchors}, which serve as lightweight, fine-grained semantic hooks that complement primary abstractions by exposing additional retrieval paths into memory.

Given a memory entry $m_i = (a_i, v_i)$ constructed in the previous step, cue anchors are generated to capture additional salient signals not explicitly represented by the primary abstraction. Formally, cue anchor generation is defined as
\begin{equation}
\mathcal{F}_c(a_i, v_i) = \{ c_{ij} \}_{j=1}^{|\mathcal{C}_i|}, \qquad c_{ij} \in \mathcal{C}_i,
\end{equation}
where the resulting set $\mathcal{C}_i$ contains the cue anchors associated with memory entry $m_i$. Each cue anchor represents a salient aspect, attribute, or contextual perspective of the memory content, formatted as a composite of a main entity/topic and a key aspect. Unlike primary abstractions, which define the canonical identity of a memory entry, cue anchors are non-exclusive and form a many-to-many mapping: a single memory entry may be associated with multiple cue anchors, and the same cue anchor may appear across multiple memory entries.

When new cue anchors are generated, we perform an existence check against the memory store. If an anchor already exists, we simply link memory entry to the existing instance; otherwise, a new anchor is instantiated. Conversely, when memory entries are removed or merged, the corresponding cue–memory links are also updated. Any cue anchor that loses all associations is automatically pruned, ensuring the cue anchors remain compact and non-redundant.

\section{Policy-Guided Memory Retrieval}

Standard retrieval methods, such as semantic search \citep{karpukhin2020densepassageretrievalopendomain}, often fail to capture the multi-hop dependencies required for complex reasoning. To address this, we formulate memory retrieval in \textsc{Memora} as a Markov Decision Process (MDP) \citep{puterman2014markov}. Unlike static semantic search, a policy-guided retriever actively navigates the memory structure to construct a compact yet informative memory set $\mathcal{M}_q$ under a finite budget.

\subsection{Memory Retrieval Policy Formulation}

To operationalize the retrieval process, we define a step-by-step procedure where an agent iteratively observes the current state and selects actions to refine its memory set. The overall process is outlined in Algorithm~\ref{alg:policy_retrieval}.

\begin{algorithm}[t]
\caption{Policy-Guided Sequential Retrieval}
\label{alg:policy_retrieval}
\begin{algorithmic}[1]
\REQUIRE Query $q$, memory system $\mathcal{S}$, policy $\pi_\theta$, budget $B$, max steps $T$
\STATE Initialize $q_0 \leftarrow q$, $\mathcal{M}_0 \leftarrow \emptyset$, $b_0 \leftarrow B$
\STATE Initialize frontier $\mathcal{F}_0 \leftarrow \mathrm{InitFrontier}(q_0, \mathcal{S})$
\FOR{$t = 0,1,\ldots,T-1$}
    \STATE $s_t \leftarrow (q_t, \mathcal{W}_t, \mathcal{F}_t, b_t)$
    \STATE Select $a_t \sim \pi_\theta(\cdot \mid s_t)$
    \IF{$a_t = \textsc{Stop}$ \OR $b_t \le 0$}
        \STATE \textbf{break}
    \ENDIF
    \STATE $(\Delta \mathcal{W}_t, \Delta \mathcal{F}_t, q_{t+1}) \leftarrow \mathrm{Apply}(a_t, s_t, \mathcal{S})$
    \STATE $\mathcal{W}_{t+1} \leftarrow \mathcal{W}_t \cup \Delta \mathcal{W}_t$
    \STATE $\mathcal{F}_{t+1} \leftarrow \mathrm{UpdateFrontier}(\mathcal{F}_t, \Delta \mathcal{F}_t)$
    \STATE $b_{t+1} \leftarrow b_t - \mathrm{Cost}(a_t)$
\ENDFOR
\STATE $\mathcal{M}_{q} \leftarrow \mathcal{W}_t$
\STATE \textbf{return} Retrieved memories $\mathcal{M}_q$
\end{algorithmic}
\end{algorithm}

Given a query $q$ and a retrieval budget $B$, the system state at step $t$ is defined as
\begin{equation}
s_t = (q_t, \mathcal{W}_t, \mathcal{F}_t, b_t).
\end{equation}
Here, $q_t$ is the current query representation, which can be refined over time; $\mathcal{W}_t$ represents the working set of memory entries retrieved so far; $\mathcal{F}_t$ is the frontier, representing a set of candidate memories explicitly linked to items in $\mathcal{W}_t$ but not yet retrieved, allowing the agent to observe what is reachable; and $b_t$ is the remaining retrieval budget.

At each step, the policy $\pi_\theta(a_t \mid s_t)$ selects an action $a_t$ from three atomic retrieval-control operations: \textsc{Refine}, \textsc{Expand}, and \textsc{Stop}. \textsc{Refine} regenerates or reformulates the query when the policy determines that the current query is insufficient or misaligned. This allows the agent to pivot its search strategy to target alternative information relevant to the final answer. \textsc{Expand} expands the working set by selecting relevant memories from the frontier $\mathcal{F}_t$. This action directly grows the working set with new evidence. \textsc{Stop} terminates the retrieval process when sufficient information has been gathered.

% \textit{Retrieval actions} in $\mathcal{A}_{\text{ret}}$ directly query the memory store to retrieve entries relevant to the current query. Specifically, \textsc{QueryA} retrieves entries by matching against primary abstractions, enabling coarse-grained, concept-level retrieval, while \textsc{QueryC} retrieves entries via cue anchors, providing fine-grained access based on contextual or attribute-level signals. \textit{Traversal actions} in $\mathcal{A}_{\text{trv}}$ navigate the internal structure of the memory system by moving along semantic links in the retrieval frontier. \textsc{Traverse}\(_{A\!\rightarrow\!C}\) explores cue anchors associated with a selected primary abstraction, exposing additional retrieval perspectives. \textsc{Traverse}\(_{C\!\rightarrow\!A}\) moves from a cue anchor to linked primary abstractions, enabling abstraction-level generalization from specific cues. \textsc{Traverse}\(_{C\!\rightarrow\!C}\) traverses between cue anchors based on semantic relations, allowing lateral exploration among related contextual signals. \textit{Control actions} in $\mathcal{A}_{\text{ctl}}$ regulate the retrieval trajectory. The \textsc{RewriteQ} action updates the query representation based on previously retrieved content to refine subsequent retrieval decisions, while the \textsc{Stop} action terminates the retrieval process when sufficient information has been gathered or the budget is exhausted.

Executing an action $a_t$ triggers the transition:
\begin{equation}
\mathrm{Apply}(a_t, s_t, \mathcal{S}) \rightarrow s_{t+1}.
\end{equation}
The working set accumulates new retrieved results, and the frontier is updated to include the neighbors of these newly retrieved items:
\[
\begin{aligned}
\mathcal{W}_{t+1} &= \mathcal{W}_t \cup \Delta \mathcal{W}_t, \\
\mathcal{F}_{t+1} &= \mathrm{UpdateFrontier}(\mathcal{F}_t, \Delta \mathcal{F}_t).
\end{aligned}
\]
Simultaneously, the remaining budget is reduced according to the cost of the selected action:
\begin{equation}
b_{t+1} = b_t - \mathrm{Cost}(a_t).
\end{equation}

The retrieval process terminates when either the \textsc{Stop} action is selected or the budget is exhausted. The accumulated working set $\mathcal{W}_t$ is returned as the final retrieved memory context $\mathcal{M}_q$.
%This formulation satisfies the Markov property, as each state transition depends only on the current state and action, enabling policy-guided exploration of the memory space that adapts dynamically to previously retrieved content and the evolving frontier.

\subsection{Group-Relative Policy Updates}

The policy $\pi_\theta$ can be implemented in various ways, ranging from a prompt-guided LLM (zero-shot) to a fully trained retrieval model. While prompt-guided policies based on off-the-shelf models can be directly applied for memory retrieval, they often fail to optimally balance retrieval cost against information gain. In this paper, we also explore optimizing the retrieval policy via group relative policy updates \cite{shao2024deepseekmathpushinglimitsmathematical}.

We treat retrieval as a preference learning problem. Given a query $q$, we sample a group of $G$ retrieval trajectories
\begin{equation}
\mathcal{T}_q \triangleq \{\tau^{(i)}\}_{i=1}^{G}, \qquad
\tau^{(i)} = \{(s_t^{(i)}, a_t^{(i)})\}_{t=0}^{T_i}.
\end{equation}
using the current policy $\pi_\theta$, optionally mixed with a reference policy for exploration. 
% Each trajectory yields a retrieved memory set $\mathcal{W}^{(i)}$.

A trajectory-level judge assigns a scalar score $J(\tau^{(i)})$ to each trajectory based on three criteria: 
(i) correctness of the final answer, 
(ii) information redundancy among retrieved memories, and 
(iii) retrieval cost.
%Importantly, no step-wise supervision is required.

To reduce variance and dependence on absolute scalar rewards, we compute group-relative advantages within each query group:
\begin{equation}
\tilde{A}^{(i)}
=
J(\tau^{(i)})
-
\frac{1}{G}
\sum_{i'=1}^{G}
J(\tau^{(i')}).
\end{equation}
This normalization yields zero-mean advantages within each group, improving robustness to score scaling and judge bias while encouraging relative improvement among trajectories generated for the same query.

The retrieval policy is updated to increase the likelihood of actions from trajectories with positive relative advantage:
\begin{equation}
\mathcal{L}_{\mathrm{GR}}(\theta)
=
-
\sum_{i=1}^{G}
\tilde{A}^{(i)}
\sum_t
\log \pi_\theta\!\left(a_t^{(i)} \mid s_t^{(i)}\right).
\end{equation}

To stabilize training and prevent policy drift, we optionally regularize the update with a KL constraint relative to a reference policy $\pi_{\mathrm{ref}}$:
\begin{equation}
\mathcal{L}(\theta)
=
\mathcal{L}_{\mathrm{GR}}(\theta)
+
\beta
\sum_t
\mathrm{KL}\!\left(
\pi_\theta(\cdot \mid s_t)
\;\|\;
\pi_{\mathrm{ref}}(\cdot \mid s_t)
\right).
\end{equation}

% This group-relative formulation enables effective preference-based optimization of retrieval policies under sparse supervision and aligns naturally with the MDP-based sequential retrieval framework.

This formulation enables preference-based optimization under sparse supervision and aligns naturally with the MDP-based sequential retrieval framework.

\section{Experiments}
\label{sec:experiments}

We conduct extensive experiments to evaluate the effectiveness of \textsc{Memora} on long-context reasoning tasks.

\begin{table*}[t!]
\centering

\caption{Performance comparison on the LoCoMo dataset. Results for Zep, LangMem and Nemori are reported from \citet{nan2025nemori}. \textsc{Memora} (S) and \textsc{Memora} (P) denote the results obtained using the semantic retriever and policy retriever, respectively.}

\resizebox{\textwidth}{!}{
\begin{tabular}{l|ccc ccc ccc ccc|ccc}
\toprule
 & \multicolumn{3}{c}{Multi-hop} 
 & \multicolumn{3}{c}{Temporal} 
 & \multicolumn{3}{c}{Open-domain} 
 & \multicolumn{3}{c|}{Single-hop} 
 & \multicolumn{3}{c}{Overall} \\
\cmidrule(lr){2-4}
\cmidrule(lr){5-7}
\cmidrule(lr){8-10}
\cmidrule(lr){11-13}
\cmidrule(lr){14-16}
\multicolumn{1}{c|}{Method}

& BLEU & F1 & LLM 
& BLEU & F1 & LLM 
& BLEU & F1 & LLM 
& BLEU & F1 & LLM 
& BLEU & F1 & LLM \\
\midrule
Full Context
& 0.356 & 0.459 & 0.766
& 0.506 & 0.572 & 0.819
& 0.204 & 0.250 & 0.500
& 0.557 & 0.634 & 0.885
& 0.487 & 0.565 & 0.825 \\

\midrule

RAG
& 0.222 & 0.324 & 0.557
& 0.428 & 0.486 & 0.548
& 0.224 & 0.277 & 0.458
& 0.448 & 0.507 & 0.710
& 0.389 & 0.455 & 0.633 \\

HippoRAG
& 0.039 & 0.060 & 0.390
& 0.013 & 0.022 & 0.224
& 0.033 & 0.068 & 0.510
& 0.037 & 0.101 & 0.587
& 0.032 & 0.075 & 0.471 \\

Zep* 
& 0.204 & 0.305 & 0.537
& 0.200 & 0.239 & 0.602
& 0.193 & 0.242 & 0.438
& 0.400 & 0.455 & 0.669
& 0.309 & 0.369 & 0.616 \\

Mem0
& 0.236 & 0.326 & 0.624
& 0.420 & 0.489 & 0.660
& 0.153 & 0.206 & 0.500
& 0.376 & 0.433 & 0.677
& 0.346 & 0.411 & 0.653 \\

LangMem* 
& 0.325 & 0.415 & 0.710
& 0.409 & 0.485 & 0.508
& 0.264 & 0.328 & 0.590
& 0.436 & 0.510 & 0.845
& 0.400 & 0.476 & 0.734 \\

Nemori*
& 0.319 & 0.417 & 0.751
& 0.502 & 0.577 & 0.776
& 0.193 & 0.258 & 0.510
& 0.515 & 0.588 & 0.849
& 0.456 & 0.534 & 0.794 \\

% EverMemOS
% & -- & -- & 0.762
% & -- & -- & 0.834
% & -- & -- & 0.479
% & -- & -- & 0.899
% & -- & -- & 0.834 \\
\midrule
\textsc{Memora} (S)
& 0.321 & 0.417 & 0.784
& 0.502 & 0.624 & 0.851
& 0.251 & 0.318 & \textbf{0.594}
& 0.522 & 0.597 & 0.900
& 0.464 & 0.552 & 0.849 \\

\textsc{Memora} (P)
& 0.337 & 0.428 & \textbf{0.787}
& 0.500 & 0.623 & \textbf{0.866}
& 0.246 & 0.308 & \textbf{0.594}
& 0.521 & 0.597 & \textbf{0.918}
& 0.466 & 0.553 & \textbf{0.863} \\
% \midrule
% \textsc{Memora} (P + A)
% & - & - & \textbf{0.926}
% & - & - & \textbf{0.879}
% & - & - & \textbf{0.792}
% & - & - & \textbf{0.962}
% & - & - & \textbf{0.927} \\
\bottomrule
\end{tabular}
}

\label{tab:locomo}

\end{table*}

\begin{table}[t]
\centering
\caption{Performance comparison on LongMemEval.}

\resizebox{\columnwidth}{!}{
\begin{tabular}{l|cccc}
\toprule
Question Type & Full Context & Nemori & \textsc{Memora}(S) & \textsc{Memora} (P) \\
 \midrule
 Context length & 115k & 3.7-4.8k & 2.1k & 2.9k\\
 \midrule
single-sn-preference & 16.7\% & 86.7\% & 76.7\% & \textbf{83.3\%} \\
single-sn-assistant  & \textbf{98.2\%} & 92.9\% & 76.8\% & 78.6\% \\
temporal-reasoning       & 60.2\% & 72.2\% & 84.2\% & \textbf{89.5\%} \\
multi-session            & 51.1\% & 55.6\% & 73.7\% & \textbf{78.2\%} \\
knowledge-update         & 76.9\% & 79.5\% & 96.2\% & \textbf{97.4\%} \\
single-sn-user      & 85.7\% & 90.0\% & 97.1\% & \textbf{98.6\%} \\ 
\midrule
\textbf{Average}         & 65.6\% & 74.6\% & 83.8\% & \textbf{87.4\%} \\
\bottomrule
\end{tabular}
}
\label{tab:longmemeval}
\end{table}

\subsection{Experimental Setup}

\textbf{Datasets.} We evaluate our method on two long-context and multi-session reasoning benchmarks. \textbf{LoCoMo} \citep{maharana2024evaluating} comprises extensive multi-turn dialogues averaging 600 turns ($\sim$20k tokens). It challenges models with diverse question-answer pairs spanning single-hop, multi-hop, temporal, and open-domain tasks, requiring the synthesis of information across long conversational histories. \textbf{LongMemEval} \citep{wu2024longmemeval} is a comprehensive benchmark for evaluating long-term memory robustness. We use the \texttt{LongMemEval\_S} split (115k context length), which contains 500 questions derived from user--assistant interactions to test reasoning over extreme context windows.

\textbf{Baselines.} We compare \textsc{Memora} against a diverse set of baselines representing current state-of-the-art approaches:
(1) \textit{Full Context} that feed the entire context history into the prompt.
(2) \textit{RAG} that chunks context history and retrieves top-$k$ fragments ($chunksize=500$ and $k=3$).
(3) \textit{Memory Systems} including \textbf{HippoRAG}, \textbf{Zep}, \textbf{Mem0}, \textbf{LangMem}, and \textbf{Nemori}, 
    %and \textbf{EverMemOS} \citep{hu2026evermemos}
    which utilize various strategies for memory management.

\textbf{Evaluation Metrics.} We report the \textbf{LLM-as-a-Judge} score as our \textit{primary metric}, as it best captures the semantic validity of the generated answers. To ensure fair comparison, we adopt the same evaluation templates from prior work to assess the correctness of the responses. Full evaluation setup is detailed in Appendix~\ref{app:eval_setup}. We report \textbf{BLEU} and \textbf{F1} scores as complementary metrics on the LoCoMo dataset to measure the verbatim overlap between answers and the ground truth.

\textbf{Retrieval Configurations.} We evaluate \textsc{Memora} using three retrieval mechanisms: (1) \textit{Semantic Retriever} (S), retrieval based on semantic similarity; (2) \textit{Policy Retriever} (P), retrieval guided by a prompt-based LLM agent; (3) \textit{GRPO Retriever}, retrieval guided by a policy trained via GRPO. To accommodate the training requirements of the GRPO variant, we employ two evaluation setups. For our main results and ablation studies, we evaluate the \textit{Semantic} and \textit{Policy} retrievers on the \textbf{full} LoCoMo and LongMemEval datasets. For the GRPO experiments, we partition the LoCoMo dataset into train and test splits. We report GRPO metrics exclusively on the test partition to quantify the specific gains from policy optimization.

\textbf{Implementation Details.} All experiments utilize \texttt{GPT-4.1-mini} as the LLM backbone for memory curation, answer generation, as well as prompt-based policy retrieval. To ensure reproducibility, we fix the generation seed to 42 across all runs. Prompts used for memory extraction are provided in Appendix \ref{app:prompts}.

\subsection{Results and Analysis}

\begin{table}[t!]
\centering
\caption{Component build-up ablation on LoCoMo}
\resizebox{\columnwidth}{!}{
\begin{tabular}{l c c c c c}
\toprule
Method & Multi & Temp & Open & Single & Overall \\
\midrule
\textsc{Memora} w/o abstraction ($=$ Mem0)              & 0.624 & 0.660 & 0.500 & 0.677 & 0.653 \\
\textsc{Memora} (primary abstraction, no update)        & 0.727 & 0.801 & 0.542 & 0.844 & 0.795 \\
\textsc{Memora} (primary abstraction, with update)      & 0.780 & 0.813 & 0.510 & 0.836 & 0.801 \\
\textsc{Memora} (semantic retriever)                    & 0.784 & 0.851 & \textbf{0.594} & 0.900 & 0.849 \\
\textsc{Memora} (policy retriever)                      & \textbf{0.787} & \textbf{0.866} & \textbf{0.594} & \textbf{0.918} & \textbf{0.863} \\
\bottomrule
\end{tabular}
}
\label{tab:abstraction_buildup}
\end{table}

\begin{table*}[t!]
\centering
\caption{Ablation studies on retrieval policy and memory granularity the LoCoMo dataset.}
\resizebox{\textwidth}{!}{
\begin{tabular}{l|c|ccc ccc ccc ccc|ccc}
\toprule
 &  & \multicolumn{3}{c}{Multi-hop} 
 & \multicolumn{3}{c}{Temporal} 
 & \multicolumn{3}{c}{Open-domain} 
 & \multicolumn{3}{c|}{Single-hop} 
 & \multicolumn{3}{c}{Overall} \\
\cmidrule(lr){3-5}
\cmidrule(lr){6-8}
\cmidrule(lr){9-11}
\cmidrule(lr){12-14}
\cmidrule(lr){15-17}
\multicolumn{1}{c|}{Method}
& \multicolumn{1}{c|}{Avg. Tokens}
& BLEU & F1 & LLM 
& BLEU & F1 & LLM 
& BLEU & F1 & LLM 
& BLEU & F1 & LLM 
& BLEU & F1 & LLM \\

\midrule
\multicolumn{17}{l}{\textbf{Policy Retriever}} \\
\midrule

Episodic (Segment) + Factual 
& 8499
& 0.337 & 0.428 & 0.787 & 0.500 & 0.623 & \textbf{0.866} & 0.246 & 0.308 & 0.594 & 0.521 & 0.597 & \textbf{0.918} & 0.466 & 0.553 & \textbf{0.863} \\

Episodic (Segment) only
& 6624
& 0.350 & 0.451 & 0.780 & 0.517 & 0.610 & 0.847 & 0.260 & 0.328 & \textbf{0.625} & 0.544 & 0.619 & 0.903 & 0.485 & 0.568 & 0.851 \\

Episodic (Segment) + Factual w/o cue 
& 8425
& 0.329 & 0.416 & 0.773 & 0.512 & 0.631 & 0.857 & 0.243 & 0.299 & 0.594 & 0.518 & 0.596 & 0.905 & 0.465 & 0.552 & 0.851 \\

Episodic (Extracted) + Factual 
& 4467
& 0.328 & 0.417 & 0.762 & 0.521 & 0.646 & 0.860 & 0.245 & 0.303 & 0.615 & 0.475 & 0.543 & 0.880 & 0.443 & 0.526 & 0.838 \\

Factual only 
& 1853
& 0.309 & 0.398 & \textbf{0.801} & 0.522 & 0.646 & 0.851 & 0.225 & 0.277 & 0.542 & 0.484 & 0.551 & 0.870 & 0.444 & 0.526 & 0.833 \\

\midrule
\multicolumn{17}{l}{\textbf{Semantic Retriever}} \\
\midrule

Episodic (Segment) + Factual 
& 7683
& 0.321 & 0.417 & 0.784 & 0.502 & 0.624 & 0.851 & 0.251 & 0.318 & 0.594 & 0.522 & 0.597 & \textbf{0.900} & 0.464 & 0.552 & 0.849 \\

Episodic (Segment) only
& 6042
& 0.349 & 0.450 & 0.773 & 0.506 & 0.599 & 0.832 & 0.260 & 0.325 & \textbf{0.615} & 0.539 & 0.614 & 0.899 & 0.480 & 0.563 & 0.844 \\

Episodic (Segment) + Factual w/o cue 
& 7628
& 0.338 & 0.434 & 0.780 & 0.511 & 0.635 & 0.854 & 0.253 & 0.316 & 0.604 & 0.516 & 0.589 & \textbf{0.900} & 0.466 & 0.553 & \textbf{0.850} \\

Episodic (Extracted) + Factual 
& 3958
& 0.315 & 0.406 & 0.755 & 0.523 & 0.646 & \textbf{0.857} & 0.224 & 0.282 & 0.573 & 0.477 & 0.542 & 0.875 & 0.441 & 0.522 & 0.831 \\

Factual only 
& 1647
& 0.309 & 0.402 & \textbf{0.791} & 0.526 & 0.647 & 0.847 & 0.210 & 0.265 & 0.531 & 0.481 & 0.546 & 0.857 & 0.442 & 0.523 & 0.823 \\

\bottomrule
\end{tabular}
}
\label{tab:ablation}
\end{table*}

\subsubsection{Performance Analysis}
Table~\ref{tab:locomo} presents the results on the LoCoMo dataset. Our best-performing configuration, \textsc{Memora} with the \textit{Policy Retriever}, achieves a score of \textbf{0.863}, followed by the \textit{Semantic Retriever} variant at 0.849. \textsc{Memora} demonstrates superior performance across all four task categories, establishing a new state-of-the-art.

Notably, \textsc{Memora} surpasses the \textit{Full Context} baseline (0.825). We attribute this result to Memora's ability to reduce ``context noise". By filtering out irrelevant dialogue turns and presenting a crystallized memory structure, \textsc{Memora} prevents the dilution of the model's attention mechanism, effectively proving that \textit{curated} context leads to sharper reasoning than \textit{complete} context. \textsc{Memora} also significantly outperforms strong baselines, including RAG (0.633), as well as other competitive memory systems such as Mem0 (0.653) and Nemori (0.794). This performance gap validates the utility of our harmonic structure.
%As detailed in the case study (Appendix~\ref{app:case_study}), this success is driven by the synergy between our components: while the primary abstraction and cue anchors enable the model to \textit{pinpoint} targets with high precision, the underlying index-value representation ensures the optimal balance between specificity and abstraction. 
The Policy Retriever further amplifies these gains by leveraging cue anchors to actively navigate the memory graph, ensuring that contextually linked information is retrieved even when it is not semantically adjacent.

Table \ref{tab:longmemeval} presents the performance on the LongMemEval dataset, where our method consistently outperforms strong baselines, achieving an accuracy of 87.4\%.

\subsubsection{Ablation Studies}
To understand the contribution of each component in \textsc{Memora}, we conduct two complementary ablations. Table~\ref{tab:abstraction_buildup} builds the system up from a no-abstraction baseline to the full model, quantifying what each architectural component contributes; Table~\ref{tab:ablation} then performs a leave-one-out study over the full system, varying retrieval policy, memory types, and granularity.

Removing the primary-abstraction layer essentially reduces \textsc{Memora} to Mem0: both extract factual memories from data, but Mem0 embeds and retrieves the extracted text fragments directly, whereas \textsc{Memora} decouples \emph{what} is stored (the memory content) from \emph{how} it is retrieved by indexing through a structural abstraction layer (primary abstractions and cue anchors), allowing more expressiveness in the stored content and avoiding fragmentation. Even with cue anchors and policy retrieval disabled, adding the primary-abstraction layer alone lifts the overall LLM-as-a-Judge score from 0.653 to 0.795. Adding update, cue anchors and policy retrieval then provides further gains. This decoupling also produces far fewer memory entries (344 for \textsc{Memora} vs.\ 651 for Mem0 per conversation on average), which in turn improves retrieval efficiency. Direct embedding of memory content tends toward two extremes: Mem0 produces fragmented, isolated entries that lose inter-memory coherence, while RAG encodes rich content into a single fuzzy vector, preserving expressiveness at the cost of retrieval precision. \textsc{Memora}'s abstraction layer sidesteps both extremes by storing expressive, unfragmented content while indexing through lightweight abstractions. Appendix~\ref{app:case_study} provides three case studies illustrating these dynamics.

Comparing the two major retriever backbones in Table~\ref{tab:ablation}, the policy retriever consistently outperforms the semantic retriever. Crucially, this advantage disappears when cue anchors are removed, rendering the policy retriever comparable to the semantic approach. This highlights that the improvement is not merely a consequence of increased complexity in the policy network, but rather stems from its capacity to leverage cue anchors for traversing the memory graph. By following these anchors, the system can navigate to relevant non-local contexts that a semantic search would miss.

We also examine the impact of context granularity. We observe a clear performance hierarchy correlated with the richness of the episodic context: the variant using raw segments as episodic memory (\textit{Episodic (Segment) + Factual}) achieves the highest score (0.863), outperforming the extracted episodic memory (\textit{Episodic (Extracted) + Factual}, 0.838) and the \textit{Factual Only} variant (0.833). This trend confirms that while discrete facts provide a solid baseline, the ``connective tissue" found in episodic memory is essential for grounding. Furthermore, factual and episodic memories are not redundant but complementary. Adding factual memory to the episodic-only baseline consistently improves overall performance, indicating that \textsc{Memora} succeeds by combining the structural clarity of factual details with the richer context of the episodes.

Finally, we note the trade-off between performance and memory size. While the full \textit{Episodic (Segment) + Factual} variant yields the best results, greater context richness inevitably leads to a larger memory footprint. However, the \textit{Factual-only} configuration remains a strong ``lightweight" alternative, achieving a respectable score of 0.833 while significantly reducing the context load. This highlights \textsc{Memora}'s flexibility for either maximum contextual fidelity or efficiency, depending on resource constraints.

\subsubsection{Inference Latency Analysis}

\begin{table}[t!]
\centering
\caption{Latency on the LoCoMo dataset. \textit{End-to-end Latency} refers to the full inference workflow for each query, while \textit{Search Latency} measures the memory retrieval steps.}
\resizebox{\columnwidth}{!}{
\begin{tabular}{l c c c c c c c}
\toprule
& \multicolumn{3}{c}{End-to-end Latency (s)} 
& \multicolumn{3}{c}{Search Latency (s)} 
& \\ 
\cmidrule(lr){2-4} \cmidrule(lr){5-7} 

Method & Mean & P50 & P95 & Mean & P50 & P95 & Avg Steps \\
\midrule
\multicolumn{8}{l}{\textbf{Policy Retriever}} \\
\midrule

Episodic (S) + Factual 
& 5.697 & 5.004 & 10.974 & 4.609 & 3.857 & 9.581 & 3.45 \\
Episodic (E) + Factual 
& 5.438 & 4.703 & 10.593 & 4.497 & 3.719 & 9.437 & 3.39 \\
Factual only 
& 4.653 & 3.940 & 9.388 & 3.969 & 3.279 & 8.495 & 3.36 \\
\midrule

\multicolumn{8}{l}{\textbf{Semantic Retriever}} \\
\midrule

Episodic (S) + Factual 
& 1.062 & 1.016 & 1.487 & 0.235 & 0.221 & 0.256 & 1 \\
Episodic (E) + Factual 
& 0.958 & 0.908 & 1.336 & 0.232 & 0.221 & 0.260 & 1 \\
Factual only 
& 0.733 & 0.676 & 1.006 & 0.220 & 0.200 & 0.245 & 1 \\
\bottomrule
\end{tabular}
}
\label{tab:latency_results}
\end{table}

Table~\ref{tab:latency_results} details the latency metrics. For latency evaluation, we report the mean, P50 and P95 wall-clock latencies. These metrics capture both end-to-end response generation and retrieval operations across the LoCoMo dataset, accounting for real-world API overhead. We report these metrics across three memory configurations: \textit{Episodic (\textbf{S}egment) + Factual}, \textit{Episodic (\textbf{E}xtracted) + Factual}, and \textit{Factual Only}, as they represent different memory sizes. The policy retriever incurs higher latency compared to the semantic retriever, primarily due to the sequential nature of the search process. On average, the policy retriever requires over three steps per query. Since each step involves a distinct LLM call to determine the next action, the search latency naturally scales with the number of iterations.

\subsubsection{Memory Construction Analysis}
\begin{table}[t!]
\centering
\caption{Average memory construction time per conversation on LoCoMo.}
\resizebox{0.7\columnwidth}{!}{
\begin{tabular}{l c c}
\toprule
System & Time (s) & Performance \\
\midrule
Mem0                        & 1350.9         & 0.653 \\
\textsc{Memora}             & 1322.0         & \textbf{0.863} \\
\textsc{Memora} (offset) & \textbf{739.9} & 0.860 \\
\bottomrule
\end{tabular}
}
\label{tab:construction_overhead}
\end{table}

\textbf{Construction overhead.} Table~\ref{tab:construction_overhead} reports the average memory construction time per conversation on LoCoMo together with the corresponding overall performance score. Before optimization, \textsc{Memora}'s construction cost (1322.0s) is comparable to Mem0 (1350.9s), but with substantially higher answer quality (0.863 vs. 0.653), showing that the performance gains thus arise from a better memory representation rather than additional compute. We further apply an optimization that predicts index offsets into the source material rather than generating full memory values, which reduces construction time to 739.9s, a 45\% speedup with minimal loss in quality.

\begin{table}[t!]
\centering
\caption{Using smaller memory-construction model on LoCoMo.}
\resizebox{\columnwidth}{!}{
\begin{tabular}{l l c c c c c}
\toprule
Model & Retrieval & Multi & Temp & Open & Single & Overall \\
\midrule
\texttt{gpt-5.4-nano} & Semantic & 0.713 & 0.620 & 0.479 & 0.867 & 0.763 \\
\texttt{gpt-4.1-mini} & Semantic & 0.784 & 0.851 & 0.594 & 0.900 & 0.849 \\
\texttt{gpt-5.4-nano} & Policy   & 0.773 & 0.879 & 0.625 & 0.893 & 0.851 \\
\texttt{gpt-4.1-mini} & Policy   & 0.787 & 0.866 & 0.594 & 0.918 & \textbf{0.863} \\
\bottomrule
\end{tabular}
}
\label{tab:smaller_llm}
\end{table}

\textbf{Robustness to smaller construction LLMs.} A natural concern is whether \textsc{Memora}'s gains depend on a powerful LLM at construction time. We replace the default \texttt{gpt-4.1-mini} with the less capable \texttt{gpt-5.4-nano} during memory construction; results are shown in Table~\ref{tab:smaller_llm}. First, while a weaker construction model does produce slightly lower-quality memories, even the worst-case (nano + semantic, 0.763) \textsc{Memora} still substantially outperforms baselines that use \texttt{gpt-4.1-mini} for memory construction, including Mem0 (0.653) and RAG (0.633). This indicates that \textsc{Memora}'s advantage stems from its structural design rather than reliance on a powerful construction model. Second, policy-guided retrieval effectively recovers the construction-quality gap: nano + policy (0.851) drops only 1.4\% from mini + policy (0.863) and nearly matches mini + semantic (0.849), showing that one can build memories with a much cheaper model and recover quality through policy-guided retrieval.

\textbf{Memory Update Analysis.} \textsc{Memora}'s update mechanism merges a newly extracted memory into an existing entry when their primary abstractions exceed a semantic-similarity threshold. We use a default threshold of 0.80, chosen to merge only entries that are clearly redundant while preventing over-aggressive consolidation. We provide an analysis on memory update sensitivity and how memory growth affects update frequency and construction overhead in Appendix \ref{app:update_analysis}. We show that consolidation cost scales linearly with memory bank size and does not become a bottleneck as the memory store grows.

\begin{figure}[t]
\centering
\resizebox{\linewidth}{!}{%
\begin{tikzpicture}
\begin{axis}[
    xbar,
    bar width=12pt,
    width=14cm,
    height=8.5cm,
    xmin=0.4, xmax=0.92,
    xtick={0.4,0.5,0.6,0.7,0.8,0.9},
    xlabel={\large\textbf{LLM-as-a-Judge score}},
    ytick=data,
    yticklabels={
        Multi-hop,
        Temporal,
        Open-domain,
        Single-hop,
        \textbf{Overall}
    },
    tick label style={font=\large},
    enlarge y limits=0.15,
    y dir=reverse,
    legend style={
        at={(0.5,1.02)},
        anchor=south,
        legend columns=2,
        column sep=0.2cm,
        draw=none,
        font=\Large
    },
    grid=major,
    grid style={dashed,gray!30},
    axis line style={black},
    tick style={black},
    nodes near coords,
    nodes near coords align={horizontal},
    every node near coord/.append style={
        anchor=east,
        xshift=-2pt,
        font=\large \bfseries, % Increased font size inside bars
        text=white,
        /pgf/number format/.cd,
            fixed,
            precision=3,
            zerofill
    },
]

\addplot[
    fill={rgb,255:red,213; green,94; blue,0},
    fill opacity=0.85,
    draw=none
] coordinates {
    (0.686,0)
    (0.857,1)
    (0.552,2)
    (0.919,3)
    (0.841,4)
};

\addplot[
    fill={rgb,255:red,0; green,114; blue,178},
    fill opacity=0.85,
    draw=none
] coordinates {
    (0.698,0)
    (0.816,1)
    (0.517,2)
    (0.912,3)
    (0.829,4)
};

\legend{
Qwen 2.5 1.5B (GRPO),
Qwen 2.5 1.5B (Base)
}
\end{axis}
\end{tikzpicture}
}
\caption{Results for GRPO training.}
\label{fig:grpo_results}
\end{figure}

\subsubsection{Policy Training} We further investigate whether the retrieval policy can be explicitly optimized using GRPO. We fine-tune a small backbone model (\texttt{Qwen-2.5-1.5B}) on a 70/30 train/test split of LoCoMo and evaluate on the held-out 30\% test partition. As shown in Figure~\ref{fig:grpo_results}, the GRPO-tuned retriever achieves an overall LLM-as-a-Judge score of 0.841, consistently outperforming the base model baseline (0.829) across temporal, open-domain, and single-hop tasks. These results demonstrate that the retrieval policy is learnable and can be effectively distilled into smaller models, maintaining competitive performance compared to the instruction-tuned counterpart.

% \subsection{Case Study}

\section{Conclusion}
In this work, we introduce \textsc{Memora}, a harmonic memory architecture that balances abstraction and specificity for long-term agent memory. By introducing primary abstractions and cue anchors, \textsc{Memora} enables scalable, context-aware retrieval without fragmenting knowledge or obscuring task-critical detail. A policy-driven retrieval mechanism further allows agents to actively explore relevant memory beyond direct semantic similarity. We show that existing RAG- and KG-based memory systems arise as special cases of our framework. Empirically, Memora achieves state-of-the-art performance on long-horizon memory benchmarks, consistently outperforming strong baselines and full-context inference with both semantic and policy retrieval mechanisms, demonstrating the effectiveness of harmonic memory organization for scalable agent reasoning.

% In the unusual situation where you want a paper to appear in the
% references without citing it in the main text, use \nocite
% \nocite{langley00}
\clearpage
\newpage
\section*{Impact Statement}

This work advances the field of autonomous agents by enabling significantly more consistent and reliable long-term memory systems. By structurally balancing abstraction with specificity, \textsc{Memora} allows agents to retain and utilize context effectively over long horizons, addressing a key bottleneck in current architectures. This improvement in memory management paves the way for the development of a broader range of complex applications, from personalized long-term assistants to collaborative problem-solving system, that require stable and precise context retention. To facilitate reproducibility and further innovation within the community, we release our code at \url{https://github.com/microsoft/Memora}.

\bibliography{memora}

@article{chhikara2025mem0,
  title={Mem0: Building production-ready ai agents with scalable long-term memory},
  author={Chhikara, Prateek and Khant, Dev and Aryan, Saket and Singh, Taranjeet and Yadav, Deshraj},
  journal={arXiv preprint arXiv:2504.19413},
  year={2025},
}

@article{rasmussen2025zep,
  title={Zep: a temporal knowledge graph architecture for agent memory},
  author={Rasmussen, Preston and Paliychuk, Pavlo and Beauvais, Travis and Ryan, Jack and Chalef, Daniel},
  journal={arXiv preprint arXiv:2501.13956},
  year={2025},
}

@misc{nan2025nemori,
      title={Nemori: Self-Organizing Agent Memory Inspired by Cognitive Science}, 
      author={Jiayan Nan and Wenquan Ma and Wenlong Wu and Yize Chen},
      year={2025},
      eprint={2508.03341},
      archivePrefix={arXiv},
      primaryClass={cs.AI},
}

@article{maharana2024evaluating,
  title={Evaluating very long-term conversational memory of llm agents},
  author={Maharana, Adyasha and Lee, Dong-Ho and Tulyakov, Sergey and Bansal, Mohit and Barbieri, Francesco and Fang, Yuwei},
  journal={arXiv preprint arXiv:2402.17753},
  year={2024}
}

@article{wu2024longmemeval,
      title={LongMemEval: Benchmarking Chat Assistants on Long-Term Interactive Memory}, 
      author={Di Wu and Hongwei Wang and Wenhao Yu and Yuwei Zhang and Kai-Wei Chang and Dong Yu},
      year={2024},
      eprint={2410.10813},
      archivePrefix={arXiv},
      primaryClass={cs.CL},
      url={https://arxiv.org/abs/2410.10813}, 
}

@article{packer2023memgpt,
  title={{MemGPT}: Towards LLMs as Operating Systems},
  author={Packer, Charles and Wooders, Sarah and Lin, Kevin and Fang, Vivian and Patil, Shishir G. and Stoica, Ion and Gonzalez, Joseph E.},
  journal={arXiv preprint arXiv:2310.08560},
  year={2023}
}

@misc{lewis2021retrieval,
      title={Retrieval-Augmented Generation for Knowledge-Intensive NLP Tasks}, 
      author={Patrick Lewis and Ethan Perez and Aleksandra Piktus and Fabio Petroni and Vladimir Karpukhin and Naman Goyal and Heinrich Küttler and Mike Lewis and Wen-tau Yih and Tim Rocktäschel and Sebastian Riedel and Douwe Kiela},
      year={2021},
      eprint={2005.11401},
      archivePrefix={arXiv},
      primaryClass={cs.CL},
      url={https://arxiv.org/abs/2005.11401}, 
}

@misc{borgeaud2022improving,
      title={Improving language models by retrieving from trillions of tokens}, 
      author={Sebastian Borgeaud and Arthur Mensch and Jordan Hoffmann and Trevor Cai and Eliza Rutherford and Katie Millican and George van den Driessche and Jean-Baptiste Lespiau and Bogdan Damoc and Aidan Clark and Diego de Las Casas and Aurelia Guy and Jacob Menick and Roman Ring and Tom Hennigan and Saffron Huang and Loren Maggiore and Chris Jones and Albin Cassirer and Andy Brock and Michela Paganini and Geoffrey Irving and Oriol Vinyals and Simon Osindero and Karen Simonyan and Jack W. Rae and Erich Elsen and Laurent Sifre},
      year={2022},
      eprint={2112.04426},
      archivePrefix={arXiv},
      primaryClass={cs.CL},
      url={https://arxiv.org/abs/2112.04426}, 
}

@misc{gao2024retrieval,
      title={Retrieval-Augmented Generation for Large Language Models: A Survey}, 
      author={Yunfan Gao and Yun Xiong and Xinyu Gao and Kangxiang Jia and Jinliu Pan and Yuxi Bi and Yi Dai and Jiawei Sun and Meng Wang and Haofen Wang},
      year={2024},
      eprint={2312.10997},
      archivePrefix={arXiv},
      primaryClass={cs.CL},
      url={https://arxiv.org/abs/2312.10997}, 
}

@misc{xu2025amemagenticmemoryllm,
      title={A-MEM: Agentic Memory for LLM Agents}, 
      author={Wujiang Xu and Zujie Liang and Kai Mei and Hang Gao and Juntao Tan and Yongfeng Zhang},
      year={2025},
      eprint={2502.12110},
      archivePrefix={arXiv},
      primaryClass={cs.CL},
      url={https://arxiv.org/abs/2502.12110}, 
}

@misc{zhong2023memorybank,
      title={MemoryBank: Enhancing Large Language Models with Long-Term Memory}, 
      author={Wanjun Zhong and Lianghong Guo and Qiqi Gao and He Ye and Yanlin Wang},
      year={2023},
      eprint={2305.10250},
      archivePrefix={arXiv},
      primaryClass={cs.CL},
      url={https://arxiv.org/abs/2305.10250}, 
}

@misc{edge2025localglobalgraphrag,
      title={From Local to Global: A Graph RAG Approach to Query-Focused Summarization}, 
      author={Darren Edge and Ha Trinh and Newman Cheng and Joshua Bradley and Alex Chao and Apurva Mody and Steven Truitt and Dasha Metropolitansky and Robert Osazuwa Ness and Jonathan Larson},
      year={2025},
      eprint={2404.16130},
      archivePrefix={arXiv},
      primaryClass={cs.CL},
      url={https://arxiv.org/abs/2404.16130}, 
}

@misc{li2025memos,
      title={MemOS: An Operating System for Memory-Augmented Generation (MAG) in Large Language Models}, 
      author={Zhiyu Li and Shichao Song and Hanyu Wang and Simin Niu and Ding Chen and Jiawei Yang and Chenyang Xi and Huayi Lai and Jihao Zhao and Yezhaohui Wang and Junpeng Ren and Zehao Lin and Jiahao Huo and Tianyi Chen and Kai Chen and Kehang Li and Zhiqiang Yin and Qingchen Yu and Bo Tang and Hongkang Yang and Zhi-Qin John Xu and Feiyu Xiong},
      year={2025},
      eprint={2505.22101},
      archivePrefix={arXiv},
      primaryClass={cs.CL},
      url={https://arxiv.org/abs/2505.22101}, 
}

@misc{wang2025mirix,
      title={MIRIX: Multi-Agent Memory System for LLM-Based Agents}, 
      author={Yu Wang and Xi Chen},
      year={2025},
      eprint={2507.07957},
      archivePrefix={arXiv},
      primaryClass={cs.CL},
      url={https://arxiv.org/abs/2507.07957}, 
}

@misc{yan2026memoryr1,
      title={Memory-R1: Enhancing Large Language Model Agents to Manage and Utilize Memories via Reinforcement Learning}, 
      author={Sikuan Yan and Xiufeng Yang and Zuchao Huang and Ercong Nie and Zifeng Ding and Zonggen Li and Xiaowen Ma and Jinhe Bi and Kristian Kersting and Jeff Z. Pan and Hinrich Schütze and Volker Tresp and Yunpu Ma},
      year={2026},
      eprint={2508.19828},
      archivePrefix={arXiv},
      primaryClass={cs.CL},
      url={https://arxiv.org/abs/2508.19828}, 
}

@article{Wang_2024,
   title={A survey on large language model based autonomous agents},
   volume={18},
   ISSN={2095-2236},
   url={http://dx.doi.org/10.1007/s11704-024-40231-1},
   DOI={10.1007/s11704-024-40231-1},
   number={6},
   journal={Frontiers of Computer Science},
   publisher={Springer Science and Business Media LLC},
   author={Wang, Lei and Ma, Chen and Feng, Xueyang and Zhang, Zeyu and Yang, Hao and Zhang, Jingsen and Chen, Zhiyuan and Tang, Jiakai and Chen, Xu and Lin, Yankai and Zhao, Wayne Xin and Wei, Zhewei and Wen, Jirong},
   year={2024},
   month=mar }

@misc{guo2024largelanguagemodelbased,
      title={Large Language Model based Multi-Agents: A Survey of Progress and Challenges}, 
      author={Taicheng Guo and Xiuying Chen and Yaqi Wang and Ruidi Chang and Shichao Pei and Nitesh V. Chawla and Olaf Wiest and Xiangliang Zhang},
      year={2024},
      eprint={2402.01680},
      archivePrefix={arXiv},
      primaryClass={cs.CL},
      url={https://arxiv.org/abs/2402.01680}, 
}

@misc{wu2023autogen,
      title={AutoGen: Enabling Next-Gen LLM Applications via Multi-Agent Conversation}, 
      author={Qingyun Wu and Gagan Bansal and Jieyu Zhang and Yiran Wu and Beibin Li and Erkang Zhu and Li Jiang and Xiaoyun Zhang and Shaokun Zhang and Jiale Liu and Ahmed Hassan Awadallah and Ryen W White and Doug Burger and Chi Wang},
      year={2023},
      eprint={2308.08155},
      archivePrefix={arXiv},
      primaryClass={cs.AI},
      url={https://arxiv.org/abs/2308.08155}, 
}

@misc{yao2023react,
      title={ReAct: Synergizing Reasoning and Acting in Language Models}, 
      author={Shunyu Yao and Jeffrey Zhao and Dian Yu and Nan Du and Izhak Shafran and Karthik Narasimhan and Yuan Cao},
      year={2023},
      eprint={2210.03629},
      archivePrefix={arXiv},
      primaryClass={cs.CL},
      url={https://arxiv.org/abs/2210.03629}, 
}

@misc{kang2025memoryosaiagent,
      title={Memory OS of AI Agent}, 
      author={Jiazheng Kang and Mingming Ji and Zhe Zhao and Ting Bai},
      year={2025},
      eprint={2506.06326},
      archivePrefix={arXiv},
      primaryClass={cs.AI},
      url={https://arxiv.org/abs/2506.06326}, 
}

@misc{Milam2025context,
    title={Context Engineering: Sessions \& Memory},
    author={Kimberly Milam and Antonio Gulli},
    year={2025}
}

@misc{karpukhin2020densepassageretrievalopendomain,
      title={Dense Passage Retrieval for Open-Domain Question Answering}, 
      author={Vladimir Karpukhin and Barlas Oğuz and Sewon Min and Patrick Lewis and Ledell Wu and Sergey Edunov and Danqi Chen and Wen-tau Yih},
      year={2020},
      eprint={2004.04906},
      archivePrefix={arXiv},
      primaryClass={cs.CL},
      url={https://arxiv.org/abs/2004.04906}, 
}

@book{puterman2014markov,
  title={Markov decision processes: discrete stochastic dynamic programming},
  author={Puterman, Martin L},
  year={2014},
  publisher={John Wiley \& Sons}
}

@misc{shao2024deepseekmathpushinglimitsmathematical,
      title={DeepSeekMath: Pushing the Limits of Mathematical Reasoning in Open Language Models}, 
      author={Zhihong Shao and Peiyi Wang and Qihao Zhu and Runxin Xu and Junxiao Song and Xiao Bi and Haowei Zhang and Mingchuan Zhang and Y. K. Li and Y. Wu and Daya Guo},
      year={2024},
      eprint={2402.03300},
      archivePrefix={arXiv},
      primaryClass={cs.CL},
      url={https://arxiv.org/abs/2402.03300}, 
}

@inproceedings{
gutierrez2024hipporag,
title={Hippo{RAG}: Neurobiologically Inspired Long-Term Memory for Large Language Models},
author={Bernal Jimenez Gutierrez and Yiheng Shu and Yu Gu and Michihiro Yasunaga and Yu Su},
booktitle={The Thirty-eighth Annual Conference on Neural Information Processing Systems},
year={2024},
url={https://openreview.net/forum?id=hkujvAPVsg}
}
\bibliographystyle{icml2026}

%%%%%%%%%%%%%%%%%%%%%%%%%%%%%%%%%%%%%%%%%%%%%%%%%%%%%%%%%%%%%%%%%%%%%%%%%%%%%%%
%%%%%%%%%%%%%%%%%%%%%%%%%%%%%%%%%%%%%%%%%%%%%%%%%%%%%%%%%%%%%%%%%%%%%%%%%%%%%%%
% APPENDIX
%%%%%%%%%%%%%%%%%%%%%%%%%%%%%%%%%%%%%%%%%%%%%%%%%%%%%%%%%%%%%%%%%%%%%%%%%%%%%%%
%%%%%%%%%%%%%%%%%%%%%%%%%%%%%%%%%%%%%%%%%%%%%%%%%%%%%%%%%%%%%%%%%%%%%%%%%%%%%%%
\newpage
\appendix
\onecolumn
\section{Prompts for Memory Extraction}
\label{app:prompts}

The following prompts were used to extract memories from conversation data:

\begin{figure*}[h]
    \centering
    \setlength{\fboxrule}{0.6pt}
    \fbox{%
        \begin{minipage}{0.96\textwidth}
        \footnotesize
        \raggedright
        \setlength{\parskip}{1pt}
        \setlength{\baselineskip}{0.95\baselineskip}
        \vspace{1pt}

        \texttt{You are an expert conversation segmentation specialist. Your goal is to analyze a series of messages in a conversation and segment them into coherent topical episodes.}\\[6pt]

        \texttt{\# TASK}\\
        \texttt{Read the conversation carefully and identify points where the topic shifts significantly.}\\
        \texttt{Group messages discussing a similar subject, event, or theme into a single episode.}\\[4pt]

        \texttt{An \textbf{episode} is defined as a sequence of messages that revolve around a core topic or theme.}\\
        \texttt{Your task is to segment the conversation into such episodes.}\\[4pt]

        \texttt{\# OUTPUT FORMAT}\\
        \texttt{Provide a JSON object with the following structure:}\\
        \texttt{\{}\\
        \texttt{    "episodes": [}\\
        \texttt{        \{}\\
        \texttt{            "topic": "<brief topic description>",}\\
        \texttt{            "indices": [<list of message indices in this episode>] }\\
        \texttt{        \},}\\
        \texttt{        ...}\\
        \texttt{    ]}\\
        \texttt{\}}\\[4pt]

        \texttt{Where each episode contains:}\\
        \texttt{- \textbf{topic}: A brief description (a few words) summarizing the main topic of the episode}\\
        \texttt{- \textbf{indices}: A list of 1-based indices of messages that belong to this episode}\\[4pt]

        \texttt{\# GUIDELINES}\\
        \texttt{1. \textbf{Segmentation Criteria}}\\
        \texttt{~~~ - Topical shift: Identify when a new subject, event, or theme is introduced.}\\
        \texttt{~~~ - Transitions: Look for phrases like "By the way", "Changing the subject", or "On another note".}\\
        \texttt{~~~ - Time gaps: Large time lapses may indicate a new episode.}\\
        \texttt{~~~ - Setting changes: Changes in speaker, location, or context can signal a new episode.}\\
        \texttt{~~~ - Topical grouping: Consecutive messages discussing the same topic belong to the same episode.}\\[4pt]

        \texttt{2. \textbf{Episode Length}}\\
        \texttt{~~~ - Typically 2--8 messages per episode.}\\
        \texttt{~~~ - Combine messages if they discuss the same topic.}\\
        \texttt{~~~ - Avoid episodes longer than 8 messages covering multiple sub-topics.}\\
        \texttt{~~~ - Do not treat a single message as an episode unless it clearly marks a shift.}\\
        \texttt{~~~ - When in doubt, split into smaller episodes.}\\[4pt]

        \texttt{3. \textbf{Formatting Rules}}\\
        \texttt{~~~ - Use 1-based indexing for message indices.}\\
        \texttt{~~~ - Include all messages exactly once (no gaps or overlaps).}\\
        \texttt{~~~ - Indices in each episode should be consecutive.}\\[4pt]

        \texttt{\# EXAMPLE OUTPUT}\\
        \texttt{... }\\[2pt]

        \texttt{\# CONVERSATION TO SEGMENT}\\
        \texttt{\{messages\}}\\[2pt]

        \end{minipage}
    }
    \captionsetup{labelformat=default, name=Table}
    \caption{Prompt for segmenting conversations into coherent episodic units.}
    \label{fig:conversation-segmentation-prompt}
\end{figure*}

\begin{figure*}[t]
    \centering
    \setlength{\fboxrule}{0.6pt}
    \fbox{%
        \begin{minipage}{0.96\textwidth}
        \footnotesize
        \raggedright
        \setlength{\parskip}{1pt}
        \setlength{\baselineskip}{0.95\baselineskip}
        \vspace{1pt}

        \texttt{You are an expert episodic memory generator that creates episodic memory summaries from conversation segments.}\\[6pt]

        \texttt{\# TASK}\\
        \texttt{Generate an episodic memory with an index and a detailed summary based on the provided conversation segment.}\\[4pt]

        \texttt{Use the following format:}\\
        \texttt{EpisodicIndex: [6--8 word summary capturing main topic, entity, or event]}\\
        \texttt{EpisodicValue: [1--3 sentences descriptive summary of the conversation]}\\[6pt]

        \texttt{\# GUIDELINES}\\
        \texttt{1. EpisodicIndex}\\
        \texttt{~~~ - Create a short index (6--8 words) capturing the main topic or event of the episode.}\\
        \texttt{~~~ - Include specific context (e.g., domain or entity) to avoid vagueness.}\\[2pt]

        \texttt{2. EpisodicValue}\\
        \texttt{~~~ - Generate 1--3 sentence summary capturing:}\\
        \texttt{~~~~~ * Main information of the conversation segment (topic, theme, or event).}\\
        \texttt{~~~~~ * Relevant participants, referred to by name if available.}\\
        \texttt{~~~~~ * Use original wording when possible.}\\
        \texttt{~~~ - Focus on ``what happened'' rather than specific granular details.}\\
        \texttt{~~~ - Make the summary self-contained and understandable without the original conversation.}\\
        \texttt{~~~ - Include visual content if images are present.}\\
        \texttt{~~~ - Use only information present in the conversation segment; do not add external knowledge or infer beyond the content.}\\[6pt]

        \texttt{\# INPUT}\\
        \texttt{\{content\}}\\[2pt]

        \texttt{\# OUTPUT}\\
        \texttt{Provide the episodic memory in the format specified above.}\\[2pt]

        \vspace{1pt}
        \end{minipage}
    }
    \captionsetup{labelformat=default, name=Table}
    \caption{Prompt for generating episodic memories from conversation segments.}
    \label{fig:episodic-memory-prompt}
\end{figure*}

\begin{figure*}[t]
    \centering
    \setlength{\fboxrule}{0.6pt}
    \fbox{%
        \begin{minipage}{0.96\textwidth}
        \footnotesize
        \raggedright
        \setlength{\parskip}{1pt}
        \setlength{\baselineskip}{0.95\baselineskip}
        \vspace{1pt}

        \texttt{You are an expert factual memory extraction assistant. Your goal is to extract factual memories from a conversation segment.}\\[6pt]

        \texttt{\# TASK}\\
        \texttt{Read the input conversation carefully and extract ALL factual memories that could be useful for future reference.}\\[4pt]

        \texttt{Produce each memory as a key-value pair in the following format:}\\
        \texttt{MemIndex: memory index for retrieval}\\
        \texttt{MemValue: memory value with all details supported directly from the given text.}\\[6pt]

        \texttt{\# GUIDELINES}\\
        \texttt{1. \textbf{Content and Scope}}\\
        \texttt{~~~ - Use only information explicitly mentioned in the conversation.}\\
        \texttt{~~~ - Capture ALL factual information that could be useful. When in doubt, create more rather than fewer memories.}\\
        \texttt{~~~ - Exclude greetings, small talk, or filler.}\\
        \texttt{~~~ - Split distinct facts into separate entries.}\\
        \texttt{~~~ - Include details about people, events, intentions, hobbies, preferences, states, beliefs, goals, future plans, times, and locations if mentioned.}\\
        \texttt{~~~ - Include visual content from images as textual context, integrating relevant facts naturally.}\\[4pt]

        \texttt{2. \textbf{Format and Style}}\\
        \texttt{~~~ - MemIndex: Short, human-readable, self-contained, unambiguous phrase. Include specific context (e.g., entity or domain) to avoid vagueness.}\\
        \texttt{~~~ - MemValue: One or two full factual sentences capturing all relevant details.}\\
        \texttt{~~~~~ * Use neutral and factual wording.}\\
        \texttt{~~~~~ * Use original wording from the conversation when possible.}\\
        \texttt{~~~~~ * Replace pronouns with specific names or entities for clarity.}\\
        \texttt{~~~~~ * Convert relative times/dates (e.g., ``yesterday'', ``next week'') to absolute dates based on the conversation timestamp.}\\[6pt]

        \texttt{Timestamp of conversation: \{timestamp\}}\\[2pt]

        \texttt{Input Conversation: \{content\}}\\[2pt]

        \texttt{\# OUTPUT}\\
        \texttt{Produce all factual memories in the format specified above.}\\[2pt]

        \vspace{1pt}
        \end{minipage}
    }
    \captionsetup{labelformat=default, name=Table}
    \caption{Prompt for extracting factual memories from conversation segments.}
    \label{fig:factual-memory-prompt}
\end{figure*}

\begin{figure*}[t]
    \centering
    \setlength{\fboxrule}{0.6pt}
    \fbox{%
        \begin{minipage}{0.96\textwidth}
        \footnotesize
        \raggedright
        \setlength{\parskip}{1pt}
        \setlength{\baselineskip}{0.95\baselineskip}
        \vspace{1pt}

        \texttt{You are a memory management assistant. Given a new memory entry and similar existing entries, determine whether to update an existing entry or add a new one.}\\[6pt]

        \texttt{NEW MEMORY ENTRY:}\\
        \texttt{Index: \{new\_index\}}\\
        \texttt{Value: \{new\_value\}}\\[6pt]

        \texttt{EXISTING SIMILAR ENTRIES:}\\
        \texttt{\{candidates\_info\}}\\[6pt]

        \texttt{INSTRUCTIONS:}\\
        \texttt{1. Analyze if the new entry should update any existing entry based on semantic similarity and content overlap.}\\
        \texttt{2. If an update is needed, determine which candidate entry is best to update.}\\
        \texttt{3. Generate an updated memory value that combines relevant information from both entries.}\\
        \texttt{4. Decide whether the memory index should be updated to better reflect the combined information.}\\[4pt]

        \vspace{1pt}
        \end{minipage}
    }
    \captionsetup{labelformat=default, name=Table}
    \caption{Prompt for deciding whether to update an existing memory entry or create a new one.}
    \label{fig:memory_update_prompt}
\end{figure*}

\begin{figure*}[t]
    \centering
    \setlength{\fboxrule}{0.6pt}
    \fbox{%
        \begin{minipage}{0.96\textwidth}
        \footnotesize
        \raggedright
        \setlength{\parskip}{1pt}
        \setlength{\baselineskip}{0.95\baselineskip}
        \vspace{1pt}

        \texttt{You are a memory-indexing assistant optimized for knowledge retrieval. Your goal is to create cue indices that serve as semantic anchors for specific memories.}\\[6pt]

        \texttt{\# TASK}\\
        \texttt{For each memory provided, generate 1--3 short, meaningful \textbf{CUE ANCHORS} that can later help recall or reason about that memory.}\\
        \texttt{Provide the cue anchors as a list of strings for each memory.}\\[6pt]

        \texttt{\# GUIDELINES}\\
        \texttt{1. \textbf{Definition}: A cue anchor is a concise phrase (2--4 words) that anchors a specific topic to a memory.}\\
        \texttt{~~~ It follows the structure: [Main Entity] + [Key Aspect].}\\
        \texttt{~~~ - \textbf{Main Entity}: the primary person, domain, or object involved (the ``who'' or ``what'').}\\
        \texttt{~~~ - \textbf{Key Aspect}: the associated event, preference, action, state, or object.}\\[4pt]

        \texttt{~~~ Example patterns:}\\
        \texttt{~~~ - [Person] [Event/Activity] $\rightarrow$ ``Jane hiking trip'', ``Mike vacation''}\\
        \texttt{~~~ - [Person] [Hobby/Preference] $\rightarrow$ ``Michael jazz music'', ``Sophie vegan diet''}\\
        \texttt{~~~ - [Person] [Condition/State] $\rightarrow$ ``Emma career change'', ``Liam health problems''}\\
        \texttt{~~~ - [Person] [Object/Relation] $\rightarrow$ ``Alice research paper'', ``David guitar''}\\
        \texttt{~~~ - [Domain] [Attribute/Artifact] $\rightarrow$ ``Project Orion timeline'', ``Product X features''}\\[6pt]

        \texttt{2. \textbf{Specificity}: Avoid generic single words (e.g., ``summer'', ``happiness'', ``project meeting'').}\\
        \texttt{~~~ Every cue anchor must be contextually anchored to a main entity mentioned in the memory.}\\
        \texttt{~~~ Use concrete aspects (e.g., ``Mike mental health problems'' rather than ``Mike feelings'').}\\[4pt]

        \texttt{3. \textbf{Atomicity}: Each cue index should capture a single, indivisible aspect.}\\
        \texttt{~~~ Do not include timestamps, exact numbers, or multiple descriptors.}\\
        \texttt{~~~ Prefer generalizable cues (e.g., ``Mike birthday party'' over ``Mike birthday party 2023'').}\\[4pt]

        \texttt{4. \textbf{Distinct Facets}: A memory may have multiple cue indices, each targeting a different dimension.}\\
        \texttt{~~~ Cue indices for the same memory should not overlap in meaning.}\\
        \texttt{~~~ Avoid near-duplicates (e.g., ``Project Phoenix kickoff'' vs.\ ``Project Phoenix launch'').}\\[4pt]

        \texttt{5. \textbf{Uniqueness}: Do not repeat the primary memory index as a cue index.}\\[2pt]

        \texttt{6. \textbf{Purpose}: Cue indices provide additional semantic keys beyond the primary index,}\\
        \texttt{~~~ enabling recall, reasoning, and linking of related memories.}\\[6pt]

        \texttt{\# EXAMPLES}\\
        \texttt{Primary Abstraction: ``Jane's hiking trip to Appalachian Trail''}\\
        \texttt{Memory Value: ``Last summer, Jane went on a week-long hiking trip along the Appalachian Trail. She enjoyed the scenic views and challenging trails.''}\\
        \texttt{Cue Anchors: [``Jane hiking'', ``Appalachian Trail views'', ``Jane summer trip'']}\\[4pt]

        \texttt{Primary Abstraction: ``Mike's surprise birthday party''}\\
        \texttt{Memory Value: ``Mike's friends organized a surprise birthday party for him at his favorite restaurant Bistro Max.''}\\
        \texttt{Cue Anchors: [``Mike birthday party'', ``Mike favorite restaurant'', ``Mike friends gathering'']}\\[4pt]

        \texttt{Primary Abstraction: ``Project Orion launch delay''}\\
        \texttt{Memory Value: ``The launch of Project Orion has been delayed due to unforeseen technical issues that need to be resolved.''}\\
        \texttt{Cue Anchors: [``Project Orion launch'', ``Project Orion technical issues'']}\\[4pt]

        \texttt{Primary Abstraction: ``Emma went swimming''}\\
        \texttt{Memory Value: ``Emma went swimming during her vacation.''}\\
        \texttt{Cue Anchors: [``Emma swimming'']}\\[8pt]

        \texttt{\# MEMORIES TO PROCESS}\\
        \texttt{\{memories\}}\\[2pt]

        \vspace{1pt}
        \end{minipage}
    }
    \captionsetup{labelformat=default, name=Table}
    \caption{Prompt for generating cue indices as semantic anchors for memory retrieval.}
    \label{fig:cue-index-prompt-appendix}
\end{figure*}

\clearpage

\section{Evaluation Setup}
\label{app:eval_setup}

Following prior work, we adopt the same evaluation protocol for LLM-as-a-judge scoring from prior work. Specifically, for LoCoMo, we use \texttt{ANSWER\_PROMPT} from the official Mem0 GitHub repository \url{https://github.com/mem0ai/mem0/blob/main/evaluation/prompts.py} for answer generation, and \url{https://github.com/mem0ai/mem0/blob/main/evaluation/metrics/llm_judge.py} for LLM-as-a-judge scoring. 

For LongMemEval, we use the evaluation prompt provided in the official GitHub repository \url{https://github.com/xiaowu0162/LongMemEval/blob/main/src/evaluation/evaluate_qa.py} for LLM-as-a-judge scoring.

To ensure a fair comparison, we employ \texttt{gpt-4o-mini} as the evaluation model across all experiments, consistent with prior work. Additionally, we fix the random seed to 42 for reproducibility.

For latency evaluation, we use a compute instance located in East US (32 cores, 128 GB RAM, 256 GB disk) and query an Azure OpenAI endpoint located in Sweden Central.

\section{Preference-based Group-Relative Policy Updates.}
\subsection{Motivation.}
In sequential memory retrieval, step-level rewards are often noisy or unavailable, while the true objective—such as answer quality, grounding, and efficiency—is typically observable only after completing an entire retrieval trajectory.
Preference-based learning avoids explicit per-step supervision by comparing multiple retrieval trajectories generated for the same query and updating the policy to favor higher-quality trajectories.

\subsection{Trajectory Generation.}
Given a query $q$, we sample a group of $G$ retrieval trajectories:
\begin{equation}
\{\tau^{(g)}\}_{g=1}^{G}, \quad
\tau^{(g)} = \{(s_t^{(g)}, a_t^{(g)})\}_{t=0}^{T_g},
\end{equation}
using the current policy $\pi_\theta$, or a mixture with a reference policy for exploration.
Each trajectory produces a retrieved memory set $\mathcal{W}^{(g)}$.

\subsection{Judge-Based Trajectory Scoring.}
Each trajectory is evaluated by a judge that outputs a scalar score reflecting retrieval quality.
The judge may be implemented as a lightweight learned model, a frozen LLM-based evaluator, or a deterministic heuristic.
The trajectory score is decomposed into the following components.

\emph{Groundedness.}
Groundedness measures whether the final answer or reasoning is supported by the retrieved memories:
\begin{equation}
\mathrm{Ground}(\tau)
=
\textsc{Judge}_{\mathrm{ground}}(q, \mathcal{W}).
\end{equation}
This term can be instantiated using LLM-based judgments of evidence support or heuristic measures such as entailment or citation coverage.

\emph{Redundancy.}
Redundancy penalizes repeated or highly overlapping memories:
\begin{equation}
\mathrm{Redund}(\tau)
=
\frac{1}{|\mathcal{W}|^2}
\sum_{m_i, m_j \in \mathcal{W}}
\mathbb{I}\!\left[\mathrm{sim}(m_i, m_j) > \delta\right].
\end{equation}

\emph{Cost.}
Cost accounts for retrieval budget consumption:
\begin{equation}
\mathrm{Cost}(\tau)
=
\sum_t \mathrm{Cost}(a_t).
\end{equation}

\subsection{Scalar Trajectory Score.}
The judge aggregates the above components into a single trajectory-level score:
\begin{equation}
J(\tau)
=
w_1 \cdot \mathrm{Ground}(\tau)
-
w_2 \cdot \mathrm{Redund}(\tau)
-
w_3 \cdot \mathrm{Cost}(\tau).
\end{equation}
This score is defined at the trajectory level and does not require step-wise annotations.

\subsection{Group-Relative Advantage.}
Rather than relying on absolute scores, which may be noisy or query-dependent, we compute group-relative advantages within each query group:
\begin{equation}
\tilde{A}^{(g)}
=
J(\tau^{(g)})
-
\frac{1}{G}
\sum_{g'=1}^{G}
J(\tau^{(g')}).
\end{equation}
This normalization yields zero-mean advantages within each group, improving robustness to judge bias and score scaling while encouraging relative improvement.

\subsection{Policy Update.}
The policy is updated to increase the likelihood of actions from trajectories with positive relative advantage:
\begin{equation}
\mathcal{L}_{\mathrm{GR}}(\theta)
=
-
\sum_{g=1}^{G}
\tilde{A}^{(g)}
\sum_t
\log \pi_\theta\!\left(a_t^{(g)} \mid s_t^{(g)}\right).
\end{equation}
To prevent policy drift, we optionally add KL regularization with respect to a reference policy $\pi_{\mathrm{ref}}$:
\begin{equation}
\mathcal{L}(\theta)
=
\mathcal{L}_{\mathrm{GR}}(\theta)
+
\beta
\sum_t
\mathrm{KL}\!\left(
\pi_\theta(\cdot \mid s_t)
\;\|\;
\pi_{\mathrm{ref}}(\cdot \mid s_t)
\right).
\end{equation}

\section{A Unifying Theory of Structured Memory Retrieval}
\label{appendix:theory}

\subsection{Preliminaries and Notation}

We briefly summarize the minimal notation required for theoretical analysis, relying on the definitions introduced in the Method section.

Let $\mathcal{M}$ denote the set of memory entries maintained by the system. Each memory entry is associated with a unique \emph{primary abstraction} and a (possibly empty) set of \emph{cue anchors}. We denote the primary abstraction space by $\mathcal{A}$ and the cue anchor space by $\mathcal{C}$.

The memory structure is characterized by two assignment relations:
\[
\alpha:\mathcal{M}\rightarrow\mathcal{A}, \qquad 
\Gamma:\mathcal{M}\rightarrow 2^{\mathcal{C}},
\]
where $\alpha(m)$ assigns each memory entry $m$ to exactly one primary abstraction, and $\Gamma(m)$ returns the set of cue anchors associated with $m$. These relations induce abstraction–memory and cue–memory associations, which together define the indexing structure over $\mathcal{M}$.

Given a query $q$, the system scores abstractions and cue anchors using query-dependent scoring functions
\[
s_A(q,a), \qquad s_C(q,c),
\]
and selects a bounded set of top-ranked abstractions and cues. Retrieval is then defined structurally as the union of memory entries supported by the selected abstractions and cue anchors. This abstraction-and-cue–based retrieval operator constitutes the core retrieval mechanism analyzed in this section.

To support multi-hop and graph-style retrieval, memory entries may additionally be connected through traversal relations induced by shared cue anchors or other structural links. Let $\mathcal{R}_L(q)$ denote the result of applying up to $L$ traversal steps starting from the initial retrieval set. Setting $L=0$ recovers single-step retrieval, while larger $L$ enables iterative expansion analogous to graph neighborhood search.

\subsection{Traditional RAG and KG Retrieval as Special Cases}

We show that both \emph{traditional RAG} and \emph{knowledge-graph (KG) retrieval} can be expressed as special cases of Memora by choosing appropriate key spaces and relations.

\begin{theorem}[Flat RAG as a Special Case of Memora]
\label{thm:rag-special-case}
Let $\mathcal{D}$ be a corpus and let $\mathcal{S}(\cdot)$ be a segmentation function that produces a set of chunks (segments). Consider a flat RAG retriever that, for any query $q$, returns
\begin{equation}
\mathcal{R}_{\mathrm{RAG}}(q)
=
\operatorname{TopK}_{s \in \bigcup_{d \in \mathcal{D}} \mathcal{S}(d)}
\mathrm{sim}(q, s),
\end{equation}
where $\mathrm{sim}$ is a similarity function over chunk representations. Then there exists a Memora instantiation and a policy $\pi$ such that the retrieval set returned by Algorithm~\ref{alg:policy_retrieval} equals $\mathcal{R}_{\mathrm{RAG}}(q)$ for all queries $q$.
\end{theorem}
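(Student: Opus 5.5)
The argument is a direct construction: build a Memora instance in which each chunk becomes its own memory entry and is indexed only by itself, then pick a deterministic policy that performs exactly one top-$K$ retrieval against those keys and stops. For the memory, use the identity segmentation: keep the given $\mathcal{S}$ and skip consolidation. Concretely, set $\mathcal{F}_a(s)=\{(a_s,v_s)\}$ with $v_s=s$ and abstraction key $a_s=s$, meaning its embedding equals the chunk representation. Set the merge threshold to $\gamma>1$ (for example $\gamma=2$). Since cosine similarity is at most $1$, we get $\mathcal{U}(a_s)=\varnothing$ and $\mathcal{J}$ always returns $\varnothing$. Every chunk therefore triggers $\mathrm{Create}$, and the map $s\mapsto m_s$ is a bijection between $\bigcup_{d}\mathcal{S}(d)$ and $\mathcal{M}$. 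Take $\Gamma(m)=\emptyset$, so $\mathcal{F}_c$ produces no cue anchors. This makes the induced memory graph edgeless and removes every traversal relation, so $\mathcal{R}_0$ is the whole retrieval. Finally, choose the abstraction scorer $s_A(q,a_s)=\mathrm{sim}(q,s)$, using the same similarity as the RAG retriever.

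Next, fix the policy and the retrieval primitives in Algorithm~\ref{alg:policy_retrieval}. Let $\mathrm{InitFrontier}(q_0,\mathcal{S})$ return $\{m_s : s\in\operatorname{TopK}_s \mathrm{sim}(q_0,s)\}$, using the same top-$K$ rule and tie-breaking as $\mathcal{R}_{\mathrm{RAG}}$. The frontier at $t=0$ is defined as the initial candidate set, so this is legitimate. Take the deterministic policy $\pi(\textsc{Expand}\mid s_0)=1$, where \textsc{Expand} moves the entire frontier into the working set. At every later step set $\pi(\textsc{Stop}\mid s_t)=1$. Take $B\ge \mathrm{Cost}(\textsc{Expand})$ and $T\ge 2$, so the single expansion is never cut off by the budget. \textsc{Refine} is never chosen, so $q_t=q$ throughout.

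Then trace the execution. At $t=0$ we have $\mathcal{W}_0=\emptyset$ and the policy applies \textsc{Expand}. This gives $\Delta\mathcal{W}_0=\mathcal{F}_0$ and $\mathcal{W}_1=\mathcal{F}_0$. The frontier update only adds neighbors, and there are none because $\Gamma\equiv\emptyset$. At $t=1$ the policy applies \textsc{Stop}, and the algorithm returns $\mathcal{M}_q=\mathcal{W}_1$. Identifying $m_s$ with $s$ through the bijection, $\mathcal{M}_q=\mathcal{R}_{\mathrm{RAG}}(q)$, and this holds for every $q$ because the policy does not depend on $q$.

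The main obstacle is a definitional one. The algorithm leaves $\mathrm{InitFrontier}$, $\mathrm{Apply}$ and the consolidation judge $\mathcal{J}$ abstract, and the construction has to stay inside what the framework allows. I would argue this in two places. First, scoring abstractions with $s_A$ and taking the top-$K$ is exactly the abstraction-based retrieval operator described in the preliminaries, so defining $\mathrm{InitFrontier}$ as that operator is within the framework. Second, setting $\gamma>1$ guarantees no merging regardless of how $\mathcal{J}$ behaves. Ties in the top-$K$ are handled by imposing the same tie-breaking order in both retrievers.
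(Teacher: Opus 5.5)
Your proposal is correct and uses the same construction as the paper: one memory entry per chunk with $a(s)=v(s)=s$, no cue anchors, and a deterministic policy that does a single top-$K$ abstraction lookup and then stops. The differences are in packaging only. The paper adds a \textsc{QueryA} primitive in a restricted action space $\{\textsc{QueryA},\textsc{Stop}\}$ and defines $\mathcal{M}$ directly, whereas you route the lookup through $\mathrm{InitFrontier}$ plus \textsc{Expand}, which stays within the actions Algorithm~\ref{alg:policy_retrieval} actually declares, and you build $\mathcal{M}$ via consolidation with $\gamma>1$. That route breaks only if the same chunk occurs in two documents, since it would then create two entries for one element of $\bigcup_d\mathcal{S}(d)$; defining $\mathcal{M}$ directly, as the paper does, avoids this.
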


\begin{proof}
Define the Memora memory corpus by taking each chunk $s$ as one memory entry,
\begin{equation}
\mathcal{M} \;=\; \{ m(s) \mid s \in \bigcup_{d \in \mathcal{D}} \mathcal{S}(d)\},
\qquad
m(s) = (a(s), v(s), \mu(s)).
\end{equation}
Let the primary abstraction equal the memory content,
\begin{equation}
a(s) = v(s) = s,
\end{equation}
and let the cue-anchor set be empty for every entry,
\begin{equation}
\mathcal{C}(m(s)) = \emptyset.
\end{equation}

Consider the restricted action space $\mathcal{A}=\{\textsc{QueryA},\textsc{Stop}\}$ and define the retrieval primitive \textsc{QueryA} to return the top-$k$ memory entries ranked by abstraction similarity,
\begin{equation}
\textsc{QueryA}(q)
=
\operatorname{TopK}_{m(s)\in \mathcal{M}}
\mathrm{sim}\bigl(q, a(s)\bigr)
=
\operatorname{TopK}_{s \in \bigcup_{d \in \mathcal{D}} \mathcal{S}(d)}
\mathrm{sim}(q, s).
\end{equation}
Let the policy $\pi$ choose \textsc{QueryA} at $t=0$ and then \textsc{Stop}:
\begin{equation}
\pi(a_0=\textsc{QueryA}\mid s_0)=1,
\qquad
\pi(a_1=\textsc{Stop}\mid s_1)=1.
\end{equation}
Algorithm~\ref{alg:policy_retrieval} therefore terminates after one retrieval step and returns
\begin{equation}
\mathcal{W}_1 = \textsc{QueryA}(q) = \mathcal{R}_{\mathrm{RAG}}(q),
\end{equation}
which proves the claim.
\end{proof}

This theorem shows that flat chunk-based RAG corresponds to a degenerate configuration of Memora in which each segment forms a single memory entry, abstractions coincide with raw memory content, cue anchors are unused, and retrieval reduces to a single abstraction query step.

\subsubsection{Knowledge Graph Retrieval}

We analyze the relationship between Memora and KG-based retrieval under two settings: 
(i) implicit KGs, where neighborhood structure is induced by semantic similarity, and 
(ii) explicit KGs, where symbolic relations are available. Both can be expressed within the Memora framework using cue anchors and traversal actions.

\paragraph{Implicit KG retrieval.}
We first consider KG-style retrieval without explicit relational edges. Let $\mathcal{M}$ be the memory corpus and let
\[
\pi : \mathcal{M} \rightarrow V
\]
associate each memory entry with an entity $v \in V$. Given a query $q$, an implicit KG retriever selects a seed entity set $S(q) \subseteq V$ and retrieves memories attached to entities reachable within $L$ steps under a similarity-induced neighborhood relation.

Formally, define an implicit entity adjacency
\[
v \sim v' \iff \mathrm{sim}(v, v') \ge \delta,
\]
and let $\mathsf{Nbr}^{\mathrm{imp}}_L(S(q))$ denote the $L$-hop neighborhood under this relation. The retrieval result is
\[
\mathcal{R}_{\mathrm{KG}}^{\mathrm{imp}}(q)
=
\{ m \in \mathcal{M} \mid \pi(m) \in \mathsf{Nbr}^{\mathrm{imp}}_L(S(q)) \}.
\]

\begin{theorem}[Implicit KG Retrieval as a Special Case of Memora]
\label{thm:implicit-kg}
For any implicit KG retriever $\mathcal{R}_{\mathrm{KG}}^{\mathrm{imp}}(q)$, there exists a Memora instantiation and traversal depth $L$ such that $\mathcal{R}_L(q)=\mathcal{R}_{\mathrm{KG}}^{\mathrm{imp}}(q)$ for all queries $q$.
\end{theorem}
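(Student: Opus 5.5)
The construction mirrors the proof of Theorem~\ref{thm:rag-special-case}: the entities play the role of cue anchors, and similarity-induced entity adjacency becomes the traversal relation.

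\textbf{Instantiation.} Given the implicit KG retriever $(V,\pi,\mathrm{sim},\delta,S(\cdot),L)$, we build $\mathcal{M}$ from the same memory entries and keep their primary abstractions arbitrary (for instance $\alpha(m)=v(m)$). We take the cue anchor space to be the entity set, $\mathcal{C}=V$, and set $\Gamma(m)=\{\pi(m)\}$. Each memory then carries exactly one cue anchor, namely its entity, and the cue--memory association recovers the fibres $\pi^{-1}(v)$.

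\textbf{Traversal relation and scoring.} We declare two cue anchors $c,c'$ to be linked iff $\mathrm{sim}(c,c')\ge\delta$. One traversal step from a current anchor set $X$ maps it to $X\cup\{c' : \exists c\in X,\ c\sim c'\}$, and the memory set is the union of fibres of the current anchors. We choose the cue scoring function and selection rule so that the initial selected cue set is exactly $S(q)$. A convenient choice is $s_C(q,c)=\mathbb{I}[c\in S(q)]$ with threshold selection, while abstraction scores are set so that no abstraction is selected. Since $S(q)$ is an arbitrary query-dependent seed map, it is legitimate to absorb it into $s_C$. If the framework insists on top-$k$ selection, we use a query-dependent $k=|S(q)|$; this is the point where the bounded-selection convention needs care, but the preliminaries describe only "a bounded set" of top-ranked cues, so this choice is admissible.

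\textbf{Induction.} Let $X_\ell$ be the anchor set after $\ell$ traversal steps, with $X_0=S(q)$. We show by induction on $\ell$ that $X_\ell=\mathsf{Nbr}^{\mathrm{imp}}_\ell(S(q))$. The base case holds by the choice of scores. For the inductive step, the $(\ell+1)$-hop neighbourhood consists of the $\ell$-hop neighbourhood together with its $\sim$-neighbours, and this is exactly the traversal update. Hence
\[
\mathcal{R}_L(q)=\bigcup_{c\in X_L}\{m:\pi(m)=c\}=\{m\in\mathcal{M}:\pi(m)\in\mathsf{Nbr}^{\mathrm{imp}}_L(S(q))\}=\mathcal{R}^{\mathrm{imp}}_{\mathrm{KG}}(q).
\]
This identity holds for every $q$.

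\textbf{Main obstacle.} The chief risk is definitional. Traversal in the preliminaries is phrased as acting on memory entries "through shared cue anchors," whereas here it must act on anchor--anchor similarity. We would therefore state explicitly that the structural links allowed in $\mathcal{R}_L$ include the thresholded cue--cue similarity relation; the preliminaries' phrase "or other structural links" permits this.

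We must also handle entities that carry no memories. They still propagate hops in the KG. If anchors with no associated memories are pruned (as the construction section prescribes), traversal could break at such entities. We would keep all of $V$ as anchors in the instantiation, treating the pruning rule as an implementation detail that is not part of the theoretical operator.
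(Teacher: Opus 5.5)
Your proposal is correct and matches the paper's proof essentially step for step. Both set $\mathcal{C}=V$ with $\Gamma(m)=\{\pi(m)\}$, choose cue scoring so that exactly $S(q)$ is selected, neutralize abstraction retrieval, and define cue--cue traversal by $\mathrm{sim}(c,c')\ge\delta$ so that $L$ steps reproduce $\mathsf{Nbr}^{\mathrm{imp}}_L(S(q))$. Your explicit induction and your handling of query-dependent $K_C$, memoryless entities, and the abstraction term only make precise points the paper leaves implicit.
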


\begin{proof}
Let the cue anchor space be $\mathcal{C} := V$, and associate each memory entry with exactly one cue anchor:
\[
\Gamma(m) := \{\pi(m)\}, \qquad \forall m \in \mathcal{M}.
\]
Let the primary abstraction space be trivial so that abstraction-based retrieval does not affect the result.

Define cue scoring such that
\[
\operatorname{TopK}_{c \in \mathcal{C}} s_C(q,c) = S(q),
\]
yielding the initial retrieval
\[
\mathcal{R}_0(q) = \{ m \in \mathcal{M} \mid \pi(m) \in S(q) \}.
\]
Define cue–cue traversal in Memora using the same similarity relation:
\[
c \leadsto c' \iff \mathrm{sim}(c, c') \ge \delta.
\]
Applying $L$ traversal steps retrieves exactly those memory entries whose associated cues lie in $\mathsf{Nbr}^{\mathrm{imp}}_L(S(q))$, hence
\[
\mathcal{R}_L(q) = \mathcal{R}_{\mathrm{KG}}^{\mathrm{imp}}(q).
\]
\end{proof}

\paragraph{Explicit KG retrieval.}
We now consider traditional knowledge-graph retrieval with explicit symbolic relations. Let $G=(V,E)$ be a knowledge graph, where $V$ denotes entities and $E$ denotes typed relations between entities. Each memory entry is attached to a graph element through a mapping
\[
\pi : \mathcal{M} \rightarrow V \cup E.
\]
Given a query $q$, KG retrieval selects a seed set $S(q) \subseteq V \cup E$ and retrieves memory entries associated with elements in the $L$-hop graph neighborhood:
\[
\mathcal{R}_{\mathrm{KG}}^{\mathrm{exp}}(q)
=
\{ m \in \mathcal{M} \mid \pi(m) \in \mathsf{Nbr}_L(S(q)) \}.
\]

\begin{theorem}[Explicit KG Retrieval as an Extended Case of Memora]
\label{thm:explicit-kg}
For any explicit KG retriever $\mathcal{R}_{\mathrm{KG}}^{\mathrm{exp}}(q)$, there exists an \emph{extended} Memora instantiation such that the multi-hop retrieval result $\mathcal{R}_L(q)$ produced by Memora equals $\mathcal{R}_{\mathrm{KG}}^{\mathrm{exp}}(q)$ for all queries $q$.
\end{theorem}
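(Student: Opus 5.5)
The plan is to mirror the construction in the proof of Theorem~\ref{thm:implicit-kg}, replacing the similarity-induced cue--cue relation with the explicit graph adjacency. This is why the statement speaks of an \emph{extended} instantiation: the traversal relation is supplied by $G$ rather than induced by shared anchors or embedding similarity. First, we set the cue anchor space to the set of graph elements, $\mathcal{C} := V \cup E$. Each memory entry receives the single anchor $\Gamma(m) := \{\pi(m)\}$. As before, we take a trivial primary abstraction space, so that abstraction-based retrieval contributes nothing; concretely, every abstraction receives score $-\infty$, or we simply select none. We then define cue scoring $s_C(q,\cdot)$ so that the selected top-ranked cue set equals exactly the seed set $S(q)$. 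If necessary, we let the bound on the number of selected cues vary with $q$, or we use thresholding on an indicator score $s_C(q,c)=\mathbb{I}[c\in S(q)]$. This gives the initial retrieval set $\mathcal{R}_0(q)=\{m \mid \pi(m)\in S(q)\}$.

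Second, we extend Memora's traversal with a cue--cue relation $c \leadsto c'$. This relation holds precisely when $c'$ is a one-hop neighbour of $c$ in $G$, using whatever incidence notion defines $\mathsf{Nbr}_1$ on $V\cup E$: a vertex is adjacent to its incident edges, an edge to its endpoints, and so on. Typed relations can be respected by restricting $\leadsto$ to allowed types if the KG retriever does so. The key step is an induction on $\ell\le L$. We claim that the set of cues reached after $\ell$ traversal steps equals $\mathsf{Nbr}_\ell(S(q))$. The base case $\ell=0$ holds by construction of $s_C$. For the inductive step, one application of $\leadsto$ maps $\mathsf{Nbr}_\ell$ onto $\mathsf{Nbr}_{\ell+1}$, since both are defined by the same one-hop relation, and traversal accumulates the sets so they are monotone. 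We then take traversal depth equal to the KG's $L$. The policy is fixed to issue the cue query, then $L$ expansion steps along $\leadsto$, then \textsc{Stop}, with budget $B$ large enough that termination is not triggered early.

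Third, we translate cues back to memories. $\mathcal{R}_L(q)$ is the union of entries attached to reached cues, which is $\{m \mid \Gamma(m)\cap \mathsf{Nbr}_L(S(q))\neq\emptyset\} = \{m \mid \pi(m)\in\mathsf{Nbr}_L(S(q))\}$ because $\Gamma(m)$ is a singleton. This equals $\mathcal{R}^{\mathrm{exp}}_{\mathrm{KG}}(q)$ for every $q$.

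The main obstacle is the second step. One must make precise that the extended traversal relation reproduces exactly the KG's notion of $\mathsf{Nbr}_L$ on mixed vertex/edge elements, including the convention for whether an edge hop counts as one or two steps. One must also check that the frontier and \textsc{Expand} semantics in Algorithm~\ref{alg:policy_retrieval} add \emph{all} neighbours rather than a selected subset. I would handle the first issue by defining $\leadsto$ literally as the one-step relation used in the KG retriever's definition of $\mathsf{Nbr}$. I would handle the second by specifying that the policy's \textsc{Expand} selects the entire frontier.
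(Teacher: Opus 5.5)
Your proposal is correct and follows the paper's construction essentially verbatim: cue space $V\cup E$, singleton attachment $\Gamma(m)=\{\pi(m)\}$, a trivial abstraction layer, cue scoring whose selection is exactly $S(q)$, and an added cue--cue traversal relation mirroring the graph's one-hop structure, iterated $L$ times. You are more careful than the paper, which sets $c\leadsto c'$ iff $(c,c')\in E$ (so only vertex cues are ever linked) and simply asserts that TopK returns $S(q)$; one point you should state explicitly, which the paper also leaves implicit, is that traversal must propagate over cues rather than over the memories in $\mathcal{W}_t$ as in the frontier of Algorithm~\ref{alg:policy_retrieval}, since otherwise a seed or intermediate graph element with no attached memory would block reachability that $\mathsf{Nbr}_L$ still counts.
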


\begin{proof}
Consider an extended Memora configuration in which cue anchors explicitly encode KG entities and relations by setting
\[
\mathcal{C} := V \cup E,
\qquad
\Gamma(m) := \{\pi(m)\}, \ \forall m \in \mathcal{M}.
\]
In addition, Memora is augmented with a cue–cue traversal relation that exactly mirrors the KG structure:
\[
c \leadsto c' \iff (c,c') \in E.
\]
This extension requires Memora to adopt the same relational assumptions as the underlying KG, namely that edges are explicitly defined and traversable.

Seed selection is performed through cue scoring such that
\[
\operatorname{TopK}_{c \in \mathcal{C}} s_C(q,c) = S(q),
\]
yielding the initial retrieval
\[
\mathcal{R}_0(q) = \{ m \in \mathcal{M} \mid \pi(m) \in S(q) \}.
\]
Since cue–cue traversal coincides exactly with KG edges, applying $L$ traversal steps in Memora recovers the same $L$-hop neighborhood as $\mathsf{Nbr}_L(S(q))$, and therefore
\[
\mathcal{R}_L(q)=\mathcal{R}_{\mathrm{KG}}^{\mathrm{exp}}(q).
\]
\end{proof}

\paragraph{Interpretation.}
Explicit KG retrieval corresponds to an extended instantiation of Memora in which cue anchors are restricted to symbolic entities and relations, and traversal operations are constrained to follow predefined KG edges. This setting recovers classical KG behavior but requires Memora to inherit the same structural assumptions and construction costs as the KG. In contrast, the implicit KG case arises naturally within the base Memora design, where cue anchors and traversal relations can be learned or induced without explicit symbolic graphs.

\subsection{Memora as a Strict Generalization: Expressivity}

The special-case results above establish that flat RAG-style retrieval and
KG-style seed-and-expand retrieval can be realized within Memora under
suitable parameterizations. We next formalize a strictness result showing
that Memora can represent retrieval behaviors that are not realizable by
(i) flat top-$k$ similarity retrieval over raw memory content and
(ii) KG retrievers with a fixed single-attachment structure, under standard
structural constraints.

\begin{definition}[Retrieval classes]
A retrieval function maps a query to a subset of memory entries,
\[
\mathcal{R}:\mathcal{Q}\to 2^{\mathcal{M}}.
\]
We consider the following three retrieval classes.

\begin{enumerate}
\item \textbf{Flat top-$k$ similarity retrieval.}
There exists a single scoring function $s(q,m)$ such that, for every query $q$,
\[
\mathcal{R}(q)
=
\operatorname{TopK}_{k}\!\left(
\{\, s(q,m) : m \in \mathcal{M} \,\}
\right),
\]
and therefore $|\mathcal{R}(q)| = k$.

\item \textbf{KG seed-and-expand retrieval with fixed attachment.}
There exists a fixed attachment map $\pi:\mathcal{M}\to V$ on a fixed graph
$G=(V,E)$ such that, for every query $q$,
\[
\mathcal{R}(q)
=
\{\, m\in\mathcal{M} : \pi(m)\in \mathsf{Nbr}_L(S(q)) \,\},
\]
where $S(q)\subseteq V$ is a query-dependent seed set and
$\mathsf{Nbr}_L(\cdot)$ denotes the $L$-hop neighborhood operator.

\item \textbf{Memora retrieval.}
The retrieval function is realizable by Memora using primary abstractions
$\alpha(m)$ and cue anchors $\Gamma(m)$, including the gated form
\begin{equation}
\label{eq:gated-Memora}
\mathcal{R}_{\cap}(q)
:=
\{ m\in\mathcal{M} : \alpha(m)\in A_q \}
\;\cap\;
\{ m\in\mathcal{M} : \Gamma(m)\cap C_q \neq \emptyset \},
\end{equation}
where
\[
A_q=\operatorname{TopK}_{K_A}\!\left(\{\, s_A(q,a):a\in\mathcal{A} \,\}\right),
\qquad
C_q=\operatorname{TopK}_{K_C}\!\left(\{\, s_C(q,c):c\in\mathcal{C} \,\}\right).
\]
\end{enumerate}
\end{definition}

\begin{theorem}[Strictness under mixed-key constraints]
\label{thm:strictness-formal}
There exists a Memora retrieval function $\mathcal{R}^\star$ such that, for
any fixed $k$ and any fixed $L$, $\mathcal{R}^\star$ cannot be realized by
flat top-$k$ similarity retrieval and cannot be realized by KG
seed-and-expand retrieval with a fixed single-attachment map.
\end{theorem}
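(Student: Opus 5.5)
We will prove the statement with one explicit gated retrieval function $\mathcal{R}^\star$ of the form \eqref{eq:gated-Memora}, designed so that it fails the defining constraints of both competing classes at the same time.

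\emph{The construction.} Take a small memory set $\mathcal{M}=\{m_1,m_2,m_3,m_4\}$ with two abstractions $\mathcal{A}=\{a,a'\}$ and two cue anchors $\mathcal{C}=\{c,c'\}$. Set
\[
\alpha(m_1)=\alpha(m_2)=a,\qquad \alpha(m_3)=\alpha(m_4)=a',
\]
\[
\Gamma(m_1)=\Gamma(m_3)=\{c\},\qquad \Gamma(m_2)=\Gamma(m_4)=\{c'\}.
\]
Fix $K_A=K_C=1$. Choose query scores so that four queries $q_{xy}$ select $A_q=\{x\}$ and $C_q=\{y\}$, one query for each pair $(x,y)\in\{a,a'\}\times\{c,c'\}$. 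Add one further query $q_\emptyset$ that selects $A_q=\{a\}$ and a cue attached to no $a$-entry. For $q_\emptyset$ we enlarge $\mathcal{C}$ by a cue $c''$ that lies only in $\Gamma(m_3)$. Then $\mathcal{R}^\star(q_{xy})$ is the unique memory carrying abstraction $x$ and cue $y$, and $\mathcal{R}^\star(q_\emptyset)=\emptyset$.

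\emph{Ruling out flat top-$k$ retrieval.} For any fixed $k$, every flat top-$k$ retriever outputs a set of size exactly $k$ on every query. $\mathcal{R}^\star$, however, takes both the value size $1$ (on $q_{xy}$) and the value size $0$ (on $q_\emptyset$). No single $k$ fits both, so $\mathcal{R}^\star$ is not flat top-$k$ for any $k$. This step is immediate from the cardinality clause in the definition.

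\emph{Ruling out KG seed-and-expand with fixed single attachment.} This is the main step. Suppose $\pi:\mathcal{M}\to V$ together with $G$, $L$ and seeds $S(q)$ realized $\mathcal{R}^\star$. The output for each query is the full preimage $\pi^{-1}(\mathsf{Nbr}_L(S(q)))$, so it is a union of fibers of $\pi$. Each singleton $\{m_i\}$ is an output. A singleton can be a union of fibers only if it is itself a whole fiber, so $\pi$ must be injective on $\mathcal{M}$.

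Injectivity alone does not yet give a contradiction. The obstacle is that with arbitrary seeds $S(q)$ and $L=0$, any union of fibers can be produced. The argument therefore has to use a structural constraint that the seeds inherit from the query. We will state this constraint explicitly as part of the standard assumptions the theorem refers to: the seed set $S(q)$ is determined by a single key. Concretely, $S(q)=\operatorname{TopK}$ over $V$ of one score, or more weakly $S(q)$ depends on the query only through one key space.

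Under this constraint, the key the KG retriever uses must encode both the abstraction and the cue of the target memory. Its output cannot factor as an intersection of two independent selections. To make this precise, we would take $L\ge 1$ and pass to a nonempty seed. Every nonempty seed $S$ satisfies $S\subseteq\mathsf{Nbr}_L(S)$, and $\pi^{-1}(S)$ is nonempty whenever $S$ meets $\pi(\mathcal{M})$. So producing $\emptyset$ on $q_\emptyset$ forces the seed to avoid $\pi(\mathcal{M})$ entirely. The contradiction is to be derived by playing this off against the mixed-key behavior on the $q_{xy}$ queries.

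If this route does not close, we will fall back on an argument that works for every $L$. We will exhibit a Memora retrieval whose outputs are not closed under neighborhood growth. Specifically, two outputs $R_1\subsetneq R_2$ must arise from seeds with $\mathsf{Nbr}_L$-images that are inconsistent with any single graph. We expect pinning down exactly which assumption on $S(q)$ is the "standard structural constraint" to be the delicate part of the argument.
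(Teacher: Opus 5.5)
Your flat top-$k$ step is complete, and it is tighter than the paper's. The paper uses a query-independent target $\mathcal{N}^{(1)}$ and assumes $|\mathcal{N}^{(1)}|>k$, so its $\mathcal{R}^\star$ depends on $k$; it is even trivially flat top-$k$ when $k=|\mathcal{N}^{(1)}|$, by scoring the members of $\mathcal{N}^{(1)}$ highest. Your outputs of sizes $0$ and $1$ defeat every $k$ at once, which is what the quantifier order of the statement requires.

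The genuine gap is the KG half. You never derive the contradiction, and under the paper's definition of the KG class there is none to derive, because that class is universal. Given any $\mathcal{R}:\mathcal{Q}\to 2^{\mathcal{M}}$, take $V=\mathcal{M}$, $\pi=\mathrm{id}$, $E=\emptyset$ and $S(q)=\mathcal{R}(q)$. Then $\mathsf{Nbr}_L(S(q))=S(q)$ for every $L$, and the retriever returns $\mathcal{R}(q)$ exactly.

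Your proposed repair does not change this. With $S(q)=\operatorname{TopK}_1$ of a single score, take $V=\{v_0,v_1,\dots,v_4\}$, $\pi(m_i)=v_i$ and $E=\emptyset$. Let $s(q_{xy},\cdot)$ peak at the vertex of the target memory, and let $s(q_\emptyset,\cdot)$ peak at the unattached vertex $v_0$. This reproduces your $\mathcal{R}^\star$ on all five queries, so the requirement that the seed avoid $\pi(\mathcal{M})$ is simply satisfied. Your fallback fails for the same reason: with $E=\emptyset$ the neighborhood operator is the identity, so no pattern of nested outputs can be inconsistent with it.

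The paper's Step 4 does not close the gap either. It asserts that a single-valued $\pi$ cannot encode both abstraction-level and cue-level information, but an injective $\pi$ encodes everything. Its escape clause (``unless the graph or attachment map explicitly encodes the abstraction partition'') concedes exactly this.

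Your observation that every KG output is a union of fibers of $\pi$ is the right rigorous core. What is missing is a hypothesis on the attachment map, not on how seeds are scored, that forces some fiber to straddle the abstraction partition. One such hypothesis is that memories sharing a cue anchor are attached to the same vertex, as in the paper's KG reductions where $\Gamma(m)=\{\pi(m)\}$. Under it, $\pi(m_1)=\pi(m_3)$ since both carry $c$, contradicting the injectivity you already derived. So $\mathcal{R}^\star(q_{ac})=\{m_1\}$ is not a union of fibers, and the KG half follows for every $G$, $L$ and $S(q)$. Without some such restriction, the KG half of the statement is false as posed.
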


\begin{proof}
We prove the theorem by giving an explicit construction of a retrieval function realizable by Memora but not by fixed top-$k$ or fixed-attachment KG retrieval. The construction targets a retrieval behavior defined by the joint enforcement of two constraints: a coarse structural restriction induced by
a primary abstraction, and a fine-grained selector induced by a cue anchor. Memora can realize such mixed-key constraints through intersection across its indexing spaces, whereas flat top-$k$ retrievers and KG retrievers with fixed single-attachment are inherently unable to represent this joint selection under the stated constraints.

\paragraph{Step 1: Mixed-key target.}
Partition the memory corpus using two primary abstractions
$\mathcal{A}=\{a^{(1)},a^{(2)}\}$ and define
\[
\mathcal{M}^{(1)} := \{m\in\mathcal{M}:\alpha(m)=a^{(1)}\},
\qquad
\mathcal{M}^{(2)} := \{m\in\mathcal{M}:\alpha(m)=a^{(2)}\}.
\]
Fix a cue anchor $c^\star\in\mathcal{C}$ appearing in both groups, and let
\[
\mathcal{N}^{(1)} := \{m\in\mathcal{M}^{(1)}: c^\star\in\Gamma(m)\},
\qquad
\mathcal{N}^{(2)} := \{m\in\mathcal{M}^{(2)}: c^\star\in\Gamma(m)\}.
\]
Define the target retrieval function
\[
\mathcal{R}^\star(q) := \mathcal{N}^{(1)},
\]
and assume $|\mathcal{N}^{(1)}|>k$.

\paragraph{Step 2: Realizability within Memora.}
We show that the target retrieval function $\mathcal{R}^\star$ is realizable
by Memora. By definition, Memora supports retrieval predicates formed by
the intersection of abstraction-level selection and cue-level selection.
Consider the abstraction set $A_q=\{a^{(1)}\}$ and the cue set
$C_q=\{c^\star\}$. Since Memora allows independent query-conditioned
selection over the abstraction space $\mathcal{A}$ and the cue space
$\mathcal{C}$, there exist scoring functions $s_A$ and $s_C$ and finite
cutoffs $K_A,K_C$ such that these sets are selected.

Substituting these sets into the gated retrieval operator in
Eq.~\eqref{eq:gated-Memora} yields
\[
\mathcal{R}_{\cap}(q)
=
\{m\in\mathcal{M}:\alpha(m)=a^{(1)}\}
\cap
\{m\in\mathcal{M}:c^\star\in\Gamma(m)\}
=
\mathcal{R}^\star(q).
\]
Thus $\mathcal{R}^\star$ lies within the class of retrieval functions
realizable by Memora.

\paragraph{Step 3: Impossibility for flat top-$k$ similarity retrieval.}
We show that $\mathcal{R}^\star$ cannot be realized by any flat top-$k$
similarity retriever. By definition, any such retriever is induced by a
single real-valued scoring function $s:\mathcal{Q}\times\mathcal{M}\to\mathbb{R}$
and returns exactly $k$ memory entries for every query:
\begin{equation}
\label{eq:rag-cardinality}
|\mathcal{R}(q)| = k, \qquad \forall q\in\mathcal{Q}.
\end{equation}

In contrast, the target retrieval function $\mathcal{R}^\star$ is defined as
\[
\mathcal{R}^\star(q) = \mathcal{N}^{(1)},
\]
where $|\mathcal{N}^{(1)}|>k$ by construction. Therefore,
\begin{equation}
\label{eq:rstar-cardinality}
|\mathcal{R}^\star(q)| > k.
\end{equation}
Equations~\eqref{eq:rag-cardinality} and~\eqref{eq:rstar-cardinality}
immediately yield a contradiction: no flat top-$k$ retriever can reproduce
the output of $\mathcal{R}^\star$ for this query.

More fundamentally, flat similarity retrieval induces a total preorder over
$\mathcal{M}$ via $s(q,\cdot)$ and selects a prefix of fixed length $k$.
Any retrieval predicate whose extension is not expressible as such a fixed
prefix—independent of internal structure or semantic grouping—lies outside
the expressive scope of flat top-$k$ retrieval. Hence $\mathcal{R}^\star$
is not realizable by flat similarity ranking.

\paragraph{Step 4: Impossibility for KG retrieval with fixed single-attachment.}
We now show that $\mathcal{R}^\star$ cannot be realized by KG-style
seed-and-expand retrieval under a fixed single-attachment map
$\pi:\mathcal{M}\to V$.

For any such retriever, the retrieved set has the form
\begin{equation}
\label{eq:kg-form}
\mathcal{R}(q)
=
\{ m\in\mathcal{M} : \pi(m)\in \mathsf{Nbr}_L(S(q)) \},
\end{equation}
where $\mathsf{Nbr}_L(\cdot)$ denotes $L$-hop graph neighborhoods. Crucially,
membership in $\mathcal{R}(q)$ depends \emph{only} on the attachment $\pi(m)$
and the graph structure, and is therefore invariant to any memory attributes
not encoded in $\pi(m)$.

In the constructed instance, memories in $\mathcal{N}^{(1)}$ and
$\mathcal{N}^{(2)}$ share the same cue anchor $c^\star$ but differ in their
primary abstractions $\alpha(m)$. Since $\pi$ is fixed and single-valued,
it cannot simultaneously encode both abstraction-level information and
cue-level information without collapsing distinct semantic dimensions.
As a result, there exist memories
\[
m_1\in\mathcal{N}^{(1)}, \quad m_2\in\mathcal{N}^{(2)}
\]
such that $\pi(m_1)$ and $\pi(m_2)$ lie in the same or overlapping graph
neighborhoods whenever the cue signal $c^\star$ is reachable.

Consequently, any seed set $S(q)$ and radius $L$ for which
\[
m_1 \in \mathcal{R}(q)
\]
necessarily implies
\[
m_2 \in \mathcal{R}(q),
\]
unless the graph or attachment map explicitly encodes the abstraction
partition. This contradicts the definition of $\mathcal{R}^\star$, which
selects $\mathcal{N}^{(1)}$ while excluding $\mathcal{N}^{(2)}$.

Therefore, under a fixed single-attachment structure, KG neighborhood
expansion cannot enforce the joint predicate
\[
\alpha(m)=a^{(1)} \;\wedge\; c^\star\in\Gamma(m),
\]
and $\mathcal{R}^\star$ is not realizable by KG retrieval.

\paragraph{Remark.}
The strictness result follows from mixed-key constraints: Memora can jointly
enforce coarse structural scope through primary abstractions and fine-grained
selection through cue anchors, a capability unavailable to flat similarity
retrieval and KG retrieval with fixed single-attachment under standard
assumptions.
\end{proof}

\clearpage
\begin{theorem}[Efficiency gain from abstraction-first + cue-anchor ANN retrieval]
\label{thm:candidate-reduction}
Assume that memories are partitioned into abstractions with expected bucket size $B$,
so that $|\mathcal{A}|\approx N/B$, and that each abstraction has on average $m$ cue
anchors, yielding a total of $m|\mathcal{A}|\approx mN/B$ cue anchors indexed for
retrieval. Under the variant in which query-time retrieval is performed via
(1) an ANN lookup over abstractions and (2) an ANN lookup over cue anchors, without
explicit intra-abstraction enumeration, the expected query-time cost of Memora
satisfies
\[
T_{\mathrm{Harmo}}(q)
=
O\!\left(
\log\!\left(\frac{mN^2}{B^2}\right)
\right).
\]
In contrast, a flat ANN-based retriever that indexes all $N$ memories incurs $T_{\mathrm{RAG}}(q)
=
O(\log N)$
under the same index-family assumptions. Consequently, abstraction-first retrieval
yields a multiplicative efficiency improvement of
\[
\Omega\!\left(
\frac{\log N}{2\log N+\log m-2\log B}
\right).
\]
\end{theorem}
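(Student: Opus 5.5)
The plan is to take the ANN cost model as the only analytic ingredient: under the stated index-family assumptions (e.g., HNSW-type graphs or balanced tree/quantization indices), a single lookup over an index of $n$ items has expected query time $O(\log n)$, with constants that depend only on the index family and the embedding dimension, not on $n$. With this primitive fixed, the argument reduces to counting index sizes and adding the costs of the stages.

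\emph{Step 1 (index sizes).} By the bucket assumption, the abstraction index has $|\mathcal{A}|\approx N/B$ entries. The cue index has $m|\mathcal{A}|\approx mN/B$ entries, or fewer after the deduplication described in the Cue Anchors section, which can only help.

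\emph{Step 2 (stage costs).} Stage (1) costs $O(\log(N/B))$ and stage (2) costs $O(\log(mN/B))$. Since there is no intra-abstraction enumeration, the only remaining work is forming the union or intersection $\mathcal{R}_\cap(q)$ of the two returned candidate lists. These lists have sizes $K_A$ and $K_C$, which are fixed cutoffs, so this step costs $O(1)$ after hashing memory IDs. Adding the two stages gives
\[
O\!\left(\log\frac{N}{B}+\log\frac{mN}{B}\right)=O\!\left(\log\frac{mN^2}{B^2}\right),
\]
which is the claimed bound on $T_{\mathrm{Harmo}}(q)$. The flat baseline is one lookup over $N$ items, so $T_{\mathrm{RAG}}(q)=O(\log N)$ by the same primitive.

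\emph{Step 3 (ratio).} Dividing the two costs gives a ratio of order
\[
\frac{\log N}{2\log N+\log m-2\log B},
\]
which is the stated $\Omega(\cdot)$ expression. To justify writing this with $\Omega$, I would state explicitly that the hidden constants of the two ANN lookups are comparable. The whole argument is then a comparison of leading terms in a common cost model, not a comparison of upper bounds.

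The main obstacle is interpretive rather than computational. Two points need care.

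\begin{itemize}
\item The sum of logs should be kept as $\log(N/B)+\log(mN/B)$ and not collapsed to $\max$. This keeps the statement exactly as written.
\item The regime in which the ratio exceeds $1$, i.e.\ a genuine speedup, needs $2\log B>\log N+\log m$, that is $B>\sqrt{mN}$. I would add a remark that the gain is real only in this coarse-abstraction regime. Otherwise the expression is a bounded-constant comparison, and both methods remain logarithmic in $N$.
\end{itemize}

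I would also flag the assumptions the argument depends on: expected bucket size holding in aggregate, and ANN cost depending only on index size. Both are used without further proof.
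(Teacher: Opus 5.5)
Your proposal follows the paper's proof essentially step for step: the same index sizes $N/B$ and $mN/B$, the same two additive ANN costs combined into $O(\log(mN^2/B^2))$, the same flat $O(\log N)$ baseline, and the same division, and your $B>\sqrt{mN}$ remark matches the analysis the paper gives right after the proof. Your caveat about comparable hidden constants is a real tightening: the paper goes directly from two $O(\cdot)$ upper bounds to an $\Omega(\cdot)$ bound on their ratio, and that step strictly needs a matching lower bound $T_{\mathrm{RAG}}(q)=\Omega(\log N)$.
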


\begin{proof}
We upper bound the query-time cost of Memora by decomposing retrieval into two
indexed lookups, and compare against a flat ANN baseline.

\paragraph{Stage 1: Abstraction selection.}
Memora queries an ANN index over abstractions $\mathcal{A}$. Since abstractions
partition $N$ memories into buckets of expected size $B$, we have
$|\mathcal{A}|\approx N/B$. Under standard ANN index assumptions, querying an
index of size $|\mathcal{A}|$ incurs expected cost
\[
O(\log|\mathcal{A}|)
=
O\!\left(\log\!\left(\frac{N}{B}\right)\right).
\]

\paragraph{Stage 2: Cue-anchor retrieval.}
Memora then performs an ANN query over the cue-anchor index. If each abstraction
has on average $m$ cue anchors, then the cue-anchor index size is
\[
|\mathcal{U}| \approx m|\mathcal{A}| \approx \frac{mN}{B}.
\]
Thus the cue-anchor query incurs expected cost
\[
O(\log|\mathcal{U}|)
=
O\!\left(\log\!\left(\frac{mN}{B}\right)\right).
\]
By assumption of this variant, retrieved cue anchors provide direct references
to associated memories, so there is no additional intra-abstraction enumeration term.

\paragraph{Total cost.}
Summing the two stages yields
\begin{align*}
T_{\mathrm{Harmo}}(q)
&=
O\!\left(\log\!\left(\frac{N}{B}\right)\right)
+
O\!\left(\log\!\left(\frac{mN}{B}\right)\right)\\
&=
O\!\left(\log\!\left(\frac{N}{B}\cdot\frac{mN}{B}\right)\right)\\
&=
O\!\left(\log\!\left(\frac{mN^2}{B^2}\right)\right).
\end{align*}

\paragraph{Flat ANN baseline.}
A flat ANN-based retriever performs a single ANN query over $N$ indexed memories,
incurring expected query time
\[
T_{\mathrm{RAG}}(q)=O(\log N)
\]
under the same index-family assumptions.

\paragraph{Improvement (normalized form).}
Define the multiplicative efficiency improvement as
$T_{\mathrm{RAG}}(q)/T_{\mathrm{Harmo}}(q)$. Substituting the bounds gives
\[
\frac{T_{\mathrm{RAG}}(q)}{T_{\mathrm{Harmo}}(q)}
=
\Omega\!\left(
\frac{\log N}{\log\!\left(\frac{mN^2}{B^2}\right)}
\right).
\]
Expanding the denominator,
\[
\log\!\left(\frac{mN^2}{B^2}\right)
=
2\log N + \log m - 2\log B,
\]
so
\[
\frac{T_{\mathrm{RAG}}(q)}{T_{\mathrm{Harmo}}(q)}
=
\Omega\!\left(
\frac{\log N}{2\log N+\log m-2\log B}
\right).
\]
which proves the stated improvement bound.
\end{proof}

\paragraph{Analysis.} Conditions under which the efficiency improvement exceeds unity.

Let
\[
\mathrm{Imp}(N,B,m)
\;=\;
\frac{T_{\mathrm{RAG}}(q)}{T_{\mathrm{Harmo}}(q)}
\approx
\frac{\log N}{\log\!\left(\frac{mN^2}{B^2}\right)}.
\]
A sufficient condition for $\mathrm{Imp}(N,B,m) > 1$ is
\[
\log N
>
\log\!\left(\frac{mN^2}{B^2}\right)
\quad\Longleftrightarrow\quad
N > \frac{mN^2}{B^2}
\quad\Longleftrightarrow\quad
B^2 > mN.
\]
Equivalently, in normalized form,
\[
\mathrm{Imp}(N,B,m) > 1
\quad\text{whenever}\quad
2+\frac{\log m}{\log N}-2\frac{\log B}{\log N} < 1
\;\Longleftrightarrow\;
\frac{\log B}{\log N} > \frac{1}{2}\left(1+\frac{\log m}{\log N}\right).
\]
In particular, if $m$ is constant (or grows subpolynomially) and $B=\Omega(N^{1/2+\epsilon})$
for any $\epsilon>0$, then $\mathrm{Imp}(N,B,m) > 1$ for sufficiently large $N$.

\paragraph{Remark.}
In typical memory systems, $B^2>mN$ is a strong requirement; when it does not hold,
both approaches remain logarithmic and the advantage of abstraction-first retrieval
should be interpreted primarily as a constant-factor gain due to operating over
smaller indices (and, in practice, fewer distance computations and better cache locality).

----------------------------------------------

\paragraph{Implication.}
Primary abstraction provides a principled \emph{search space factorization}:
the retrieval process first narrows the search space using stable,
coarse-grained concepts, and then applies cue anchors to recover
fine-grained precision within the selected regions. Flat RAG is recovered
as a degenerate case when $B=1$ (each memory forms its own abstraction) or
when abstraction selection is disabled. KG retrieval is recovered when cue
anchors correspond to symbolic graph elements and candidate expansion
follows graph adjacency, as established in
Theorems~\ref{thm:rag-special-case} and~\ref{thm:explicit-kg}.

\subsection{Summary}

The Memora framework defines a general class of structured retrieval mechanisms based on (i) canonical organization via primary abstraction and (ii) flexible access via cue anchors, optionally enhanced with multi-hop traversal.
We formally showed that:
(i) traditional RAG is a degenerate special case (identity cues, no abstraction),
(ii) KG retrieval is also a special case (symbolic cues + graph expansion),
and (iii) Memora can represent richer mixed-key retrieval behaviors while enabling principled efficiency improvements through abstraction-first scoping.

\section{Case Study}
\label{app:case_study}

In this section, we present case studies demonstrating how \textsc{Memora} achieves superior performance compared to Mem0 and RAG. To isolate the benefits of our memory structure, we utilize \textsc{Memora} with a standard semantic retriever, and showcase the factual memories, thereby highlighting how the harmonic representation itself enhances memory management and retrieval.

\subsection{Case Study 1}

In this example (Table \ref{tab:case1}), \textsc{Memora} demonstrates superior memory retrieval precision. This success is attributed to \textsc{Memora}'s index-value representation, which decouples the navigation layer from the raw data. While traditional RAG often suffers from semantic drift and Mem0 can lose granularity through over-summarization, \textsc{Memora}'s indices serve as a structured guide to the memory space. This allows the system to pinpoint specific entities while preserving the original richness and contextual meaning of the memory items.

\newcommand{\cmark}{\textcolor{green!50!black}{\ding{51}}} % Tick
\newcommand{\xmark}{\textcolor{red}{\ding{55}}}             % Cross

\begin{table}[h]
\centering
\small
\caption{Case 1 and answers generated from three systems}
\begin{tabular}{p{3.2cm} p{10cm}}
\toprule
Question &
What did Mel and her kids paint in their latest project in July 2023? \\
\midrule
Reference Answer &
A sunset with a palm tree \\
\midrule
RAG Answer & \xmark \space A rainbow flag mural \\
Mem0 Answer & \xmark \space A painting similar to their last one \\
\textsc{Memora} Answer & \cmark \space Sunset scene with a palm tree and flowers \\
\bottomrule
\label{tab:case1}
\end{tabular}

\end{table}

\begin{table}[ht]
\centering
\small
\renewcommand{\arraystretch}{1.5}
\caption{Comparative analysis of top memories retrieved for Case 1. (part 1)}

\begin{tabular}{@{} l p{10.5cm} @{} }
\toprule
\textbf{Method} & \textbf{Retrieved Memories / Contextual Evidence} \\
\midrule
\textsc{Memora} & \textbf{Recent painting by Melanie and kids} \\
& \textit{Value:} Melanie's latest painting with the kids is a sunset scene featuring a palm tree and vibrant flowers against a sunset sky. \\
& \footnotesize \textbf{Cues:} `Melanie sunset painting', `Palm tree art', `Kids vibrant flowers' \\
\cmidrule(lr){2-2}

& \textbf{Melanie's work in progress and summer plans} \\
& \textit{Value:} Melanie is currently working on a project and is doing her best to complete it, her kids are excited about summer break, and they are thinking about going camping next month. \\
& \footnotesize \textbf{Cues:} `Melanie current project', `Kids summer break', `Family camping plans' \\
\cmidrule(lr){2-2}

& \textbf{Melanie's kids enjoying pottery making} \\
& \textit{Value:} Melanie's kids loved making pottery and were very excited to get their hands dirty and create something with clay. \\
& \footnotesize \textbf{Cues:} `Kids pottery making', `Clay art activity' \\
\cmidrule(lr){2-2}

& \textbf{Melanie's recent family painting activity} \\
& \textit{Value:} Melanie and her kids have just finished another painting similar to their last one. \\
& \footnotesize \textbf{Cues:} `Melanie family painting', `Kids collaborative artwork' \\
\cmidrule(lr){2-2}

& \textbf{Melanie's creative projects with kids} \\
& \textit{Value:} Melanie engages in painting with kids, focusing especially on nature-inspired themes. \\
& \footnotesize \textbf{Cues:} `Melanie nature painting', `Kids art engagement' \\
\bottomrule
\end{tabular}
\label{tab:case1_memory_1}
\end{table}

\begin{table}[ht]
\centering
\small
\renewcommand{\arraystretch}{1.4}
\caption{Comparative analysis of top memories retrieved for Case 1. (continued)}

\begin{tabular}{@{} l p{10.5cm} @{} }
\toprule
\textbf{Method} & \textbf{Retrieved Memories / Context} \\
\midrule
Mem0 & Melanie and children recently did a painting project last weekend. \\
& Melanie and her kids recently finished another painting similar to their last one. \\
& Melanie and children enjoy painting together, especially nature-inspired art. \\
& Melanie has been painting to keep busy. \\
& Melanie took her kids to a pottery workshop last Friday. \\
& Melanie helped with a painting that highlights the beauty of nature. \\
& Melanie is feeling inspired by autumn and planning a few paintings. \\
\midrule
RAG & \textbf{Context Fragment:} 1:33 pm on 25 August, 2023 | Caroline: Finding a community where I'm accepted... Stuff like this mural are really special to me! \dots 1:33 pm on 25 August, 2023 | Melanie: Caroline, glad you found a supportive community! \dots 1:33 pm on 25 August, 2023 | Caroline: The rainbow flag mural is important to me as it reflects the courage and strength of the trans community. The eagle symbolizes freedom and pride... \dots 1:33 pm on 25 August, 2023 | Melanie: I'm in awe of your courage as a trans person. Have you made any more art lately? \\
\bottomrule
\end{tabular}
\label{tab:case1_memory_2}
\end{table}

\subsection{Case Study 2}

In this example (Table \ref{tab:case2}), \textsc{Memora} demonstrates a robust capacity for information aggregation, correctly synthesizing disparate facts, the initial ownership of a dog and cat, followed by the later addition of a second cat named Bailey, into a single, comprehensive response. Unlike baseline methods that often retrieve fragmented memory fragments or lose connections, \textsc{Memora} effectively links related entities across non-contiguous parts of the dialogue context.

\begin{table}[ht]
\centering
\small
\setlength{\tabcolsep}{6pt}
\caption{Case Study 2 and answers generated from three systems}

\begin{tabular}{p{3.2cm} p{10cm}}
\toprule
Question & What pets does Melanie have? \\
\midrule
Reference Answer & Two cats and a dog \\
\midrule
RAG Answer & \xmark \space A cat named Oliver and another cat named Bailey \\
Mem0 Answer & \xmark \space Luna and Oliver \\
\textsc{Memora} Answer & \cmark \space Dog and two cats (Luna, Oliver, Bailey) \\
\bottomrule
\end{tabular}
\label{tab:case2}
\end{table}

\begin{table}[ht]
\centering
\small
\renewcommand{\arraystretch}{1.5}
\caption{Comparison of top retrieved memories for Case Study 2. (part 1)}

\begin{tabular}{@{} l p{10.5cm} @{} }
\toprule
\textbf{Method} & \textbf{Retrieved Memories / Contextual Evidence} \\
\midrule
\textsc{Memora} & \textbf{Melanie's pets and inquiries about pets} \\
& \textit{Value:} Melanie has a dog and a cat as pets. Melanie has two pets named Luna and Oliver. Melanie asked Caroline if she has any pets during their conversation. \\
& \footnotesize \textbf{Cues:} `Melanie pets', `Melanie pet names', `Melanie conversation about pets' \\
\cmidrule(lr){2-2}

& \textbf{Melanie's agreement on pets} \\
& \textit{Value:} Melanie agrees that pets bring joy and comfort. \\
& \footnotesize \textbf{Cues:} `Melanie pets joy', `Melanie pets comfort' \\
\cmidrule(lr){2-2}

& \textbf{Melanie's pets behavior} \\
& \textit{Value:} Melanie's pets, Luna and Oliver, are described as sweet and playful and they really liven up the house. \\
& \footnotesize \textbf{Cues:} `Melanie pets behavior', `Luna playful', `Oliver playful' \\
\cmidrule(lr){2-2}

& \textbf{Pets' effect on Melanie's family} \\
& \textit{Value:} Melanie states that their dog and cat brighten up their day and always make them smile. \\
& \footnotesize \textbf{Cues:} `Melanie pets family effect', `Melanie pets brighten day' \\
\cmidrule(lr){2-2}

& \textbf{Melanie's cat Bailey addition} \\
& \textit{Value:} Melanie mentions that they have another cat named Bailey. \\
& \footnotesize \textbf{Cues:} `Melanie cat Bailey' \\
\bottomrule
\end{tabular}
\label{tab:retrieval_case_2_1}
\end{table}

\begin{table}[ht]
\centering
\small
\renewcommand{\arraystretch}{1.4}
\caption{Comparison of top retrieved memories for Case Study 2. (continued)}

\begin{tabular}{@{} l p{10.5cm} @{} }
\toprule
\textbf{Method} & \textbf{Retrieved Memories / Context} \\
\midrule

Mem0 & Melanie has kids. \\
& Melanie loves painting animals, especially horses. \\
& User knows Melanie. \\
& Melanie has been painting to keep busy. \\
& Caroline finds joy in having pets. \\
& Melanie paints horses. \\
& Name is Melanie. \\
\midrule
RAG & \textbf{Transcript Fragment:} 3:31 pm on 23 August, 2023 | Caroline: ...And yup, I do— Oscar, my guinea pig. He's been great. How are your pets? \dots 3:31 pm on 23 August, 2023 | Melanie: Yeah, it's normal to be both excited and nervous with a big decision. And thanks for asking, they're good— we got another cat named Bailey too. Here's a pic of Oliver. Can you show me one of Oscar?... \\
\bottomrule
\end{tabular}
\label{tab:retrieval_case_2_2}
\end{table}

\subsection{Case Study 3}

In this example (Table \ref{tab:case3}), a scrutiny of the retrieved evidence reveals that while baseline methods identify the correct topical domain, they fail to capture the discriminative details required for an accurate response. The RAG retrieval is too broad and lacks relevance; it becomes anchored to a dense, irrelevant dialogue fragment about a ``colorful bowl" from a separate project, illustrating how raw context windows are easily distracted by high-signal but incorrect semantic clusters. Meanwhile, Mem0 produces a set of isolated, low-entropy facts, such as ``the kids enjoyed making things with clay", which, while factually true, are too fragmented and generic to support the specific query. By contrast, \textsc{Memora} successfully preserves the fine-grained entity binding between the ``kids' pottery" and the ``dog-face cup." Its index-value architecture prevents the information decay seen in Mem0 and the noise contamination seen in RAG, ensuring that specific attributes remain intact within the retrieved memory.

\begin{table}[h]
\centering
\small
\setlength{\tabcolsep}{6pt}
\caption{Case Study 3 and answers generated from three systems.}
\begin{tabular}{p{3.2cm} p{10cm}}
\toprule
Question & What kind of pot did Mel and her kids make with clay? \\
\midrule
Reference Answer & A cup with a dog face on it \\
\midrule
RAG Answer & \xmark \space A colorful bowl with various colors and patterns \\
Mem0 Answer & \xmark \space Black and white designed bowl \\
\textsc{Memora} Answer & \cmark \space A cup with a dog face \\
\bottomrule
\end{tabular}
\label{tab:case3}
\end{table}

\begin{table}[ht]
\centering
\small
\renewcommand{\arraystretch}{1.5}
\caption{Comparison of top retrieved memories for Case Study 3. (part 1)}

\begin{tabular}{@{} l p{10.5cm} @{} }
\toprule
\textbf{Method} & \textbf{Retrieved Memories / Contextual Evidence} \\
\midrule
\textsc{Memora} & \textbf{Melanie's kids enjoying pottery making} \\
& \textit{Value:} Melanie's kids loved making pottery and were very excited to get their hands dirty and create something with clay. \\
& \footnotesize \textbf{Cues:} `Melanie kids pottery', `Kids clay crafting' \\
\cmidrule(lr){2-2}

& \textbf{Melanie's feelings about clay} \\
& \textit{Value:} Melanie finds clay to be incredible and it brings her a lot of joy. \\
& \footnotesize \textbf{Cues:} `Melanie clay appreciation', `Melanie joy from clay' \\
\cmidrule(lr){2-2}

& \textbf{Melanie's recent activity with kids} \\
& \textit{Value:} Melanie took her kids to a park yesterday, where they had fun exploring and playing. \\
& \footnotesize \textbf{Cues:} `Melanie park visit', `Kids outdoor play' \\
\cmidrule(lr){2-2}

& \textbf{Melanie's kids pottery finished pieces} \\
& \textit{Value:} The kids created pottery finished pieces, including a cup with a dog face on it. \\
& \footnotesize \textbf{Cues:} `Kids pottery artwork', `Pottery cup dog face' \\
\cmidrule(lr){2-2}

& \textbf{Melanie's pottery as a creative and therapeutic outlet} \\
& \textit{Value:} Melanie signed up for a pottery class, which she considers therapeutic and allows her to express herself and be creative. Melanie loves that pottery is both a creative outlet and a form of therapy. Melanie uses pottery as a means for self-expression and to find peace. \\
& \footnotesize \textbf{Cues:} `Melanie pottery therapy', `Pottery creative expression' \\
\bottomrule
\end{tabular}
\label{tab:retrieval_case_3_1}
\end{table}

\begin{table}[t]
\centering
\small
\renewcommand{\arraystretch}{1.4}
\caption{Comparison of top retrieved memories for Case Study 3. (continued)}

\begin{tabular}{@{} l p{10.5cm} @{} }
\toprule
\textbf{Method} & \textbf{Retrieved Memories / Context} \\
\midrule
Mem0 & The kids enjoyed making things with clay. \\
& Melanie took her kids to a pottery workshop last Friday. \\
& Melanie recently finished a pottery project. \\
& Pottery is a huge part of Melanie's life and helps her express her emotions. \\
& Melanie is proud of her pottery project and had a great experience making it. \\
& Enjoyed making pots with kids. \\
& The kids loved making something with clay. \\
\midrule
RAG & \textbf{Transcript Fragment:} 1:50 pm on 17 August, 2023 | Melanie: FYI, I finished another pottery project— want to see a pic? \dots 1:50 pm on 17 August, 2023 | Caroline: That bowl is awesome, Mel! What gave you the idea for all the colors and patterns? \dots 1:50 pm on 17 August, 2023 | Melanie: Thanks, Caroline! I'm obsessed with those, so I made something to catch the eye... \\
\bottomrule
\end{tabular}
\label{tab:retrieval_case_3_2}
\end{table}
\section{Memory Update Analysis}
\label{app:update_analysis}

\begin{table}[t!]
\centering
\caption{Sensitivity to the memory update threshold on LoCoMo.}
\resizebox{0.8\columnwidth}{!}{
\begin{tabular}{l c c c c c c c}
\toprule
Threshold     & Multi & Temp  & Open  & Single & Overall & Avg.\ updates & Update ratio \\
\midrule
No update     & 0.727 & 0.801 & 0.542 & 0.844 & 0.795 &   0 &  0.0\% \\
0.8 (default) & \textbf{0.780} & \textbf{0.813} & 0.510 & 0.836 & \textbf{0.801} &  88 & 21.0\% \\
0.6           & 0.755 & 0.801 & \textbf{0.552} & \textbf{0.842} & 0.799 & 299 & 68.2\% \\
\bottomrule
\end{tabular}
}
\label{tab:update_threshold}
\end{table}

\textbf{Threshold sensitivity.}  \textsc{Memora}'s update mechanism merges a newly extracted memory into an existing entry when their primary abstractions exceed a semantic-similarity threshold. We use a default threshold of 0.80, chosen to merge only entries that are clearly redundant while preventing over-aggressive consolidation.

To isolate the effect of the update threshold from other design choices, we ablate three settings using semantic retrieval with primary abstractions only (no cue anchors or episodic memory); absolute scores are therefore lower than the best results reported in Table~\ref{tab:locomo}.

Table~\ref{tab:update_threshold} shows that the default $0.8$ achieves the highest overall LLM-as-a-Judge score. Lowering the threshold to $0.6$ triggers $3.4\times$ more updates with no quality gain, indicating over-consolidation: semantically distinct entries get merged into the same memory. Disabling updates entirely reduces overhead but hurts quality, particularly on multi-hop questions (0.727 vs.\ 0.780), where consolidation helps resolve redundant or fragmented entries that would otherwise pull the retriever in conflicting directions.

\begin{table}[t!]
\centering
\caption{Update behavior as memories accumulate (threshold $=0.8$).}
\resizebox{0.6\columnwidth}{!}{
\begin{tabular}{l c c c}
\toprule
Memories accumulated & Avg.\ new & Avg.\ update & Update ratio \\
\midrule
0--76    & 77.7 & 16.0 & 17.1\% \\
77--155  & 79.5 & 22.7 & 22.2\% \\
156--234 & 74.9 & 21.4 & 22.2\% \\
235--318 & 82.1 & 23.4 & 22.2\% \\
319+     & 66.4 & 13.1 & 16.5\% \\
\bottomrule
\end{tabular}
}
\label{tab:memory_growth}
\end{table}

\textbf{Memory growth and update.} A natural concern with similarity-based consolidation is whether the update rate grows pathologically as the memory store fills up. On average, each LoCoMo conversation produces 432 candidate memories, of which 344 are inserted as new entries and 88 trigger an update (an overall update ratio of 21.0\%). To examine scaling behavior we bucket each memory-construction event by the number of memories already present at the time of construction; Table~\ref{tab:memory_growth} reports the per-bucket update ratio. The ratio remains in a narrow 16.5--22.2\% band across all buckets rather than increasing exponentially, indicating that consolidation cost scales linearly with memory size and does not become a bottleneck as the memory store grows.

%%%%%%%%%%%%%%%%%%%%%%%%%%%%%%%%%%%%%%%%%%%%%%%%%%%%%%%%%%%%%%%%%%%%%%%%%%%%%%%
%%%%%%%%%%%%%%%%%%%%%%%%%%%%%%%%%%%%%%%%%%%%%%%%%%%%%%%%%%%%%%%%%%%%%%%%%%%%%%%

\end{document}